\title[Adaptive Reward-Free Exploration]{Adaptive Reward-Free Exploration}
\begin{document}

\maketitle

\begin{abstract}
\emph{Reward-free exploration} is a reinforcement learning setting studied by \citet{Jin20RewardFree}, who address it by running several algorithms with regret guarantees in parallel. In our work, we instead  give a more natural \emph{adaptive} approach for reward-free exploration which directly reduces upper bounds on the maximum MDP estimation error. We show that, interestingly, our reward-free UCRL algorithm can be seen as a variant of an algorithm of Fiechter from 1994 \citep{Fiechter94}, originally proposed for a different objective that we call \emph{best-policy identification}. We prove that \OurAlgorithm{} needs of order $({SAH^4}/{\varepsilon^2})(\ln(1/\delta) + S)$ episodes to output, with probability $1-\delta$, an $\varepsilon$-approximation of the optimal policy for \emph{any} reward function. This bound improves over existing sample-complexity bounds in both the small $\varepsilon$ and the small $\delta$ regimes. We further investigate the relative complexities of reward-free exploration and best-policy identification. 
\end{abstract}

\begin{keywords}%
  reinforcement learning, reward-free exploration, upper confidence bounds
\end{keywords}

\section{Introduction}

Reinforcement learning problems are related to learning and/or acting with a good policy in an unknown, stochastic environment, which requires to perform the right amount of \emph{exploration}. In this work, we consider the discounted episodic setting with discount $\gamma \in (0,1]$ and horizon $H$ and model the environment as a Markov Decision Process (MDP) with finite state space $\cS$ of size $S$ and finite action space $\cA$ of size $A\geq 2$, transition kernels $P = (p_h(\cdot | s,a))_{h,s,a}$ and reward function $r = (r_h(s,a))_{h,s,a}$ for $h \in [H]$\footnote{We use the shorthand $[n] = \{1,\dots,n\}$ for every integer $n \in \N^*$.}, $(s,a) \in \cS \times \cA$. The value of a policy $\pi = (\pi_{1},\dots,\pi_{H})$ in step $h \in [H]$ is given by

\vspace{-0.7cm}

\[V_h^{\pi}(s_h ; r) \triangleq \bE^{\pi}\Big[\Big.\sum_{\ell=h}^H\gamma^{h-1}r_h(s_ {\ell},\pi_{\ell}(s_{\ell})) \Big| s_h\Big].\]
In this definition we explicitly materialize the dependency in the reward function $r$, but the expectation also depends on the transition kernel: for all $\ell\in [H]$, $s_{\ell+1} \sim p_{\ell}(\cdot | s_{\ell},\pi_{\ell}(s_{\ell}))$ and a reward with expectation $r_{\ell}(s_ {\ell},\pi_{\ell}(s_{\ell}))$ is generated. We denote by $\pi^\star$ the optimal policy, such that in every step $h\in [H], V_h^{\pi^\star}(s;r) \geq V_h^{\pi}(s ;r)$ for any policy $\pi$, and by $V^\star$ its value function.

An online reinforcement learning algorithm successively generates trajectories of length $H$ in the MDP, starting from an initial state $s_1$ drawn from some distribution $P_0$. The $t$-th trajectory is generated under a policy $\pi^t$ which may depend on the data collected in the $(t-1)$ previous episodes. Given a fixed reward function $r$, several objective have been considered in the literature: maximizing the total reward accumulated during learning, or minimizing some notion of regret \citep{Azar17UCBVI}, proposing a guess for a good policy after a sufficiently large number of episodes \citep{Fiechter94} or guarantee that the policies used during learning are most of the time $\varepsilon$-optimal \citep{dann15PAC}, see Section~\ref{sec:PACsetup} for a precise description. 

Yet in applications, the reward function $r$ is often handcrafted to incentivize some behavior from the RL agent, and  its design can be hard, so that we may end up successively learning optimal policies for different reward functions. This is the motivation given by \citet{Jin20RewardFree} for the \emph{reward-free exploration problem}, in which the goal is to be able to approximate the optimal policy under \emph{any} reward function after a single exploration phase. More precisely, an algorithm for reward-free exploration should generate a dataset $\cD_N$ of $N$ reward-free trajectories ---with $N$ as small as possible--- such that, letting $\hat{\pi}_{N,r}$ be the optimal policy in the MDP $(\hat{P}_N,r)$ (where $\hat{P}_N$ is the empirical transition matrix based on the trajectories in $\cD_N$),  one has
\begin{equation}\bP\left(\text{ for all reward functions } r, \; \bE_{s_1 \sim P_1}\left[V_1^\star(s; r) - V_1^{\hat{\pi}_{N,r}}(s; r)\right] \leq \varepsilon\right) \geq 1 - \delta.\label{obj:PAC}\end{equation}

The solution proposed by \citet{Jin20RewardFree} builds on an algorithm proposed for the different \emph{regret minimization} objective. In order to generate $\cD_N$, their algorithm first run, for each $(s,h)$, $N_0$ episodes of the \euler algorithm of \citet{Zanette19Euler} for the MDP $(P,r^{(s,h)})$ where $r^{(s,h)}$ is a reward function that gives 1 at step $h$ if state $s$ is visited, and 0 otherwise. For each $(s,h)$, after the corresponding \euler has been executed, the $N_0$ policies used in the $N_0$ episodes of \euler are added to a policy buffer $\Phi$. Once this policy buffer $\Phi$ (which contains $S \times H \times N_0$ policies) is complete , the $\cD_N$ database is obtained by generating $N$ episodes under $N$ policies picked uniformly at random in $\Phi$ (with replacement). \citet{Jin20RewardFree} provide a calibration of $N$ and $N_0$ for which \eqref{obj:PAC} holds, leading to a \emph{sampling complexity}, i.e. a total number of exploration episodes, of $\cO\left(\tfrac{S^2AH^5}{\varepsilon^2}\log\left(\tfrac{SAH}{\delta\varepsilon}\right) + \tfrac{S^4AH^7}{\varepsilon^2}\log^3\left(\tfrac{SAH}{\delta\varepsilon}\right)\right)$.

In this paper, we propose an alternative, more natural approach to reward free exploration, that does not rely on any regret minimizing algorithm. We show that (a variant of) an algorithm proposed by \citet{Fiechter94} for Best Policy Identification (BPI) ---a setting described in details in Section~\ref{sec:PACsetup}--- can be used for reward-free exploration. We give a new, simple, sample complexity analysis for this algorithm which improves over that of \citet{Jin20RewardFree}. This (new) algorithm can be seen as a reward-free variant of UCRL \citep{UCRL10}, and is designed to uniformly reduce the estimation error of the Q-value function of \emph{any} policy under \emph{any} reward function, which is instrumental to prove~\eqref{obj:PAC}, as already noted by \citet{Jin20RewardFree}.

Building on a similar idea, the parallel work of \citet{wang2020on} studies reward-free exploration with a particular linear function approximation, providing an algorithm with a sample complexity of order $d^3 H^6\log(1/\delta)/\epsilon^2$, where $d$ is the dimension of the feature space. In the tabular case, $d=SA$ and the resulting sample complexity becomes worse than the one of \cite{Jin20RewardFree}.  Furthermore, \citet{zhang2020task-agnostic} recently studied a setting in which there are only $N$ possible reward functions in the planning phase, for which they provide an algorithm with complexity $\tcO\left(H^5 S A\log(N)\log(1/\delta)/\epsilon^2\right)$\footnote{The $\tcO$ notation is ignoring logarithmic factors in $1/\varepsilon$ and $\log(1/\delta)$.}.

\paragraph{Alternative views on reward-free RL}

Realistic reinforcement-learning applications often face a challenge of a sparse rewards which at the beginning provides no signal
for decision-making.  Numerous attempts were made to guide the exploration in the beginning,
motivated by curiosity \citep{schmidhuber1991possibility,PMID:22791268}, intrinsic motivation \citep{NIPS2015_5668,NIPS2004_2552}, exploration bonuses
\citep{tang2017exploration, ostrovski2017count} mutual information \citep{montufar2016information} and many of its approximations, for instance with variational autoencoders \citep{NIPS2015_5668}.

Nonetheless, it is even more challenging to analyze the exploration and provide guarantees for it.
A  typical take is to consider a well defined proxy for exploration and analyze that. 
For example,  \citet{lim2012autonomous, gajane2019autonomous} cast the  skill discovery as an ability to reach any state within $L$ hops.
Another example is to look  for policies finding the stochastic shortest path \citep{tarbouriech2019no,cohen2020near}
or aiming for the maximum entropy \citep{hazan2018provably}. In our work, we provide an adaptive counterpart to the work of \citet{Jin20RewardFree} for a \emph{reward-free exploration}.

 \paragraph{Outline}  In Section~\ref{sec:PACsetup}, we present the \emph{reward-free exploration} (RFE) setting and contrast it with other standard PAC reinforcement-learning settings, notably the \emph{best-policy identification} (BPI). The \OurAlgorithm{} algorithm is introduced in Section~\ref{sec:algorithm}. In Section~\ref{sec:analysis}, we present its sample complexity analysis. As a variant of \OurAlgorithm{} was originally proposed by \citet{Fiechter94} for BPI, in Section~\ref{sec:RF_to_BPI}, we investigate the difference in complexity between RFE and BPI, and propose the \OurAlgorithmBPI{} algorithm. Finally, we propose numerical simulations in Section~\ref{sec:experiments} to illustrate how the two algorithms explore, compared to oracle strategies using a generative model.

\section{Several PAC Reinforcement Learning Problems}
\label{sec:PACsetup}

In this section, we formally introduce the reward free exploration problem, which is a particular PAC (Probability Approximately Correct) learning problem. We then contrast it with several other PAC reinforcement learning frameworks that have been studied in the literature.

\paragraph{Reward-free exploration} An algorithm for Reward-Free Exploration (RFE) sequentially collects a database of trajectories in the following way. In each time step $t$, a policy $\pi^{t}=(\pi^{t}_h)_{h=1}^H$ is computed based on data from the $t-1$ previous episodes, a \emph{reward-free episode} $z_{t} = (s_1^{t},a_1^{t},s_2^{t},a_2^{t},\dots,s_H^{t},a_H^{t})$ is generated under the policy $\pi^t$ in the MDP starting from a first state $s_1^{t} \sim P_0$: for all $h\in[H]$, $s_h^t \sim p_h(s_{h-1}^t,\pi^t(s_{h-1}^t)$ and the new trajectory is added to the database: $\cD_{t} = \cD_{t-1} \bigcup \{z_{t}\}$.
At the end of each episode, the algorithm can decide to stop collecting data (we denote by $\tau$ its random stopping time) and outputs the dataset $\cD_{\tau}$. 

A RFE algorithm is therefore made of a triple $((\pi^{t})_{t\in\N},\tau,\cD_{\tau})$. The goal is to build an $(\varepsilon,\delta)$-PAC algorithm according to the following definition, for which the \emph{sample complexity}, that is the number of exploration episodes $\tau$ is as small as possible.

\begin{definition}[PAC algorithm for RFE] \label{def:RFE} An algorithm is $(\varepsilon,\delta)$-PAC for reward-free exploration if
\[\bP\left(\text{for all reward function r}, \left|\bE_{s_1 \sim P_0}\left[V_1^{\star}(s_1;r) - V_1^{\hat{\pi}^\star_{\tau,r}}(s_1 ; r)\right]\right| \leq \varepsilon\right) \geq 1 - \delta,\]
where $\hat{\pi}^\star_{\tau,r}$\footnote{We could also define $\hat{\pi}^\star_{\tau,r}$ to be the outcome of some planning phase that takes as an input $\cD_{\tau}$ and $r$ with controlled planning error. Yet for simplicity we stick to the natural choice of $\hat{\pi}^\star_{\tau,r}$ being the optimal policy in the empirical MDP built from $\cD_{\tau}$, which can be computed exactly using backwards induction in the tabular case that we consider.} is the optimal policy in the MDP parameterized by $(\hat{P}^{\tau}, r)$, with  $\hat{P}^{\tau}$ being the empirical transition kernel estimated from the dataset $\cD_{\tau}$.
\end{definition}

\paragraph{Sample complexity in RL} For the discounted episodic setting that is our focus in this paper, in which learning proceeds by a sequence of episodes, the first formal PAC RL model was proposed by \citet{Fiechter94}. As in this framework a RL algorithm should also \emph{output a guess for a near-optimal policy}, we refer to it as Best Policy Identification (BPI). A BPI algorithm is made of a triple $((\pi^{t})_{t\in\N},\tau,\hat{\pi}_{\tau})$ where $\hat{\pi}_{\tau}$ is the policy returned after $\tau$ steps of exploration.

\begin{definition}[PAC algorithm for BPI] \label{def:BPI} An algorithm is $(\varepsilon,\delta)$-PAC for best policy identification if
\[\bP\left(\bE_{s_1 \sim P_0}\left[V_1^{\star}(s_1) - V_1^{\hat{\pi}_{\tau}}(s_1)\right] \leq \varepsilon\right) \geq 1 - \delta.\]
\end{definition}

In the discounted setting that is the focus of \cite{Fiechter94}, choosing a horizon $H = (1-\gamma)^{-1}\log((\varepsilon(1-\gamma))^{-1})$ the policy $\hat{\pi}_{\tau}$\footnote{This policy is extended to select random actions for $h > H$.} outputted by an $(\varepsilon,\delta)$-PAC algorithm for BPI with horizon $H$ is $2\varepsilon$-optimal in terms of the infinite horizon discounted value function. Yet, this requires an online learning process in which the agent can control the length of episode and use a ``restart button''. This assumption was presented as a limitation in subsequent works, which converged on a different notion of PAC algorithm. While the E$^3$ algorithm of \citet{kearns02E3} stops in some state $s_{\tau}$ and outputs a policy $\hat{\pi}_{\tau}$ that needs to be $\varepsilon$-optimal in that state, other algorithms such as R$_{\text{MAX}}$ \citep{Brafman02RMAX}, Delayed Q-Learning \citep{Strehl06DelayedQL} or MBIE \citep{Strehl08MBIE} do not output a policy, but are proved to be {PAC-MDP} according to a definition formalized by \citet{Kakade03PhD} for discounted or average reward MDPs. Under an $(\varepsilon,\delta)$-PAC MDP algorithm generating a trajectory $(s_t)_{t\in\N}$, there is a polynomial number of time steps $t$ in which $V^{\star}(s_t) - V^{\cA_t}(s_t) > \varepsilon$ where $\cA_t$ is the policy used in the future steps of the algorithms.

The notion of PAC-MDP algorithm was later transposed to the (discounted) episodic setting \citep{dann15PAC,dann2017unifying} as an algorithm such that, with probability $1-\delta$, $\sum_{t=1}^{\infty}\ind(V_1^{\star}(s_1^{t}) - V_1^{\pi^t}(s_1^t) > \varepsilon)$ is upper bounded by a polynomial in $S,A,1/\varepsilon,1/\delta$ and $H$.
%
%
%
PAC-MDP seems to be the most studied PAC reinforcement learning framework these days. However, reward free exploration is closer to the BPI framework: in the latter, an algorithm should stop and output a guess for the optimal policy associated to a particular reward function $r$ (possibly unknown and observed through samples), while in the former it should stop and be able to estimate the optimal policy associated to \emph{any} reward function. In the next section, we show that a variant of the first algorithm proposed by \citet{Fiechter94} for BPI, that we call \OurAlgorithm{} can actually be used for the (harder ?) reward free exploration problem and provide a new sample complexity analysis for it. We discuss further the link between RFE and BPI in Section~\ref{sec:RF_to_BPI}.

Finally, sample complexity results have also been given for reinforcement learning based on a generative model, in which one can build a database of transitions performed in an arbitrary order (without the constrain to generate episodes). In the discounted setting, \citet{Azar12SCGene} propose an improved analysis of Model-Based Q-Value Iteration \citep{KearnsS98MBQVI}, which samples $n$ transitions from every state-action pair and run value-iteration in the estimated MDP. They show that with a total sampling budget $T = nSA = \cO\left((cSA/((1-\gamma)^3\varepsilon^2)) \ln({SA}/{\delta})\right)$, the optimal Q-value in the estimated MDP $\hat{Q}$ satisfies $\| \hat{Q} - Q^\star\|_{\infty} \leq \varepsilon$ with probability larger than $1-\delta$.



\section{Reward-Free UCRL}\label{sec:algorithm}

To ease the presentation of our algorithm, we assume that the first state distribution $P_0$ is supported on a single state $s_1$. Following an observation from \cite{Fiechter94}, this is without loss of generality, as we may otherwise consider an alternative MDP with an extra initial state $s_0$ with a single action $a_0$ that yield a null reward and from which the transitions are $P_0(\cdot |s_0,a_0) = P_{0}$. Indeed, letting $\tilde{V}_0^{\tilde{\pi}}$ denote the value of policy $\tilde{\pi}$ such that $\tilde{\pi}_0 = a_0$ and $\tilde{\pi}_{1:H} = \pi$ for any episodic problem of horizon $H+1$ and discount sequence $(1,1,\gamma,\gamma^{2},\dots)$, it holds that $\tilde V^\star_0(s_0 ; r) - \tilde V^{\tilde \pi}_0(s_0 ; r) = \bE_{
s_1 \sim P_1} \left[V^\star_1(s_1 ; r) - V^{\pi}_0(s_0;r)\right]$.

\paragraph{Notation} For all $h\in [H]$ and $(s,a) \in \cS \times \cA$, we let $n_h^{t}(s,a) = \sum_{i=1}^{t} \ind\left((s_h^i,a_h^i) = (s,a)\right)$ be the number of times the state action-pair $(s,a)$ was visited in step $h$ in the first $t$ episodes and  $n_h^{t}(s,a,s') = \sum_{i=1}^{t} \ind\left((s_h^i,a_h^i,s_{h+1}^i) = (s,a,s')\right)$. This permits to define the empirical transitions
\[\hat p^{t}_h(s' | s,a) = \frac{n_h^t(s,a,s')}{n_h^t(s,a)} \text{ if } n_h^t(s,a)>0, \ \text{ and } \  \hat p^{t}_h(s' | s,a) = \frac{1}{S} \text{ otherwise}. \]
We denote by $\hat{V}^{t,\pi}_h(s ; r)$ (resp. $\hat{Q}^{t,\pi}_h(s ,a; r)$) the value (resp.\, Q-values) functions in the empirical MDP with transition kernels $\hat{P}^t$ and reward function $r$, where we recall that the Q-value of a policy $\pi$ in a MDP with transitions $p_h(s' |s,a)$ and mean reward $r_h(s,a)$ is defined by
$Q_h^{\pi}(s,a ; r) = r_h(s,a) + \gamma \sum_{s' \in \cS} p_h(s' |s,a) V_{h+1}^{\pi}(s')$.
Finally, we let $\sigma_h = \sum_{i=0}^{h-1}\gamma^i$ and note that $\sigma_h \leq h$.

\paragraph{Error upper bounds} \OurAlgorithm{} is based on an upper bound on the estimation error for each policy $\pi$ (and each value function $r$). For every $\pi$, $r$, $t$, we define this error as
\[\hat{e}_h^{t,\pi}(s,a ; r) := |\hat{Q}^{t,\pi}_h(s,a ; r) - Q^{\pi}_h(s,a ; r)|.\]
The algorithm relies on an ``upper confidence bound'' $\bar{E}_h^{t}(s,a)$ for the error defined recursively as follows: 
$\bar{E}_{H+1}^{t}(s,a)  =  0$ for all $(s,a)$ and, for all $h \in [H]$, with the convention $1/0 = +\infty$,
\begin{equation}
 \bar{E}_{h}^{t}(s,a)  =\min\!\left(\!\gamma\sigma_{H-h},\,\gamma\sigma_{H-h}\sqrt{\frac{2\beta(n_h^t(s,a),\delta)}{n_h^t(s,a)}} + \gamma \sum_{s'} \hat p_h^t(s' |s,a) \max_{b}\bar{E}_{h+1}^{t}(s',b)\right) \label{def:bound}\,. \end{equation}
Although $\bar{E}_{h}^{t}(s,a)$ does not depend on a policy $\pi$ or a reward function $r$, Lemma~\ref{lem:optimism} shows that it is a high-probability upper bound an the error $\hat{e}_h^{t,\pi}(s,a ; r)$ for \emph{any $\pi$ and $r$}.

 \begin{lemma}\label{lem:optimism} With $\KL(p || q) = \sum_{s \in \cS} p(s) \ln \tfrac{p(s)}{q(s)}$ the Kullback-Leibler divergence between two distributions over $\cS$, on the event 
  \[\cE = \left\{\forall t \in \N, \forall h \in [H], \forall (s,a), \KL\big(\hat{p}^t_h(\cdot | (s,a)), p_h(\cdot | (s,a))\big)\leq \tfrac{\beta(n_h^t(s,a),\delta)}{n_h^t(s,a)}\right\}\;,\]
it holds that for any policy $\pi$ and reward function $r$, $\hat{e}_h^{t,\pi}(s,a ; r) \leq \bar{E}_{h}^{t}(s,a)$.
  \end{lemma}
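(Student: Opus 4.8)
The plan is to prove the bound $\hat{e}_h^{t,\pi}(s,a ; r) \leq \bar{E}_h^t(s,a)$ by \emph{backward induction on $h$}, from $h = H+1$ down to $h = 1$, keeping $t$, $(s,a)$, $\pi$ and $r$ fixed throughout and assuming the event $\cE$ holds. The base case $h = H+1$ is immediate: both $\hat{Q}^{t,\pi}_{H+1}$ and $Q^{\pi}_{H+1}$ vanish, so $\hat{e}_{H+1}^{t,\pi}(s,a ; r) = 0 = \bar{E}_{H+1}^t(s,a)$. For the inductive step, I would assume $\hat{e}_{h+1}^{t,\pi}(s',b ; r) \leq \bar{E}_{h+1}^t(s',b)$ for all $(s',b)$ and establish the bound at step $h$.

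The core of the argument is a Bellman-difference decomposition. Writing out the definition of the $Q$-values and using that the mean reward $r_h(s,a)$ is common to the empirical and true MDPs, it cancels, leaving
\[\hat{Q}^{t,\pi}_h(s,a ; r) - Q^{\pi}_h(s,a ; r) = \gamma\sum_{s'}\hat{p}^t_h(s'|s,a)\hat{V}^{t,\pi}_{h+1}(s' ; r) - \gamma\sum_{s'}p_h(s'|s,a)V^{\pi}_{h+1}(s' ; r).\]
I would add and subtract the cross term $\gamma\sum_{s'}\hat{p}^t_h(s'|s,a)V^{\pi}_{h+1}(s' ; r)$ to split the right-hand side into a \emph{propagation term} $\gamma\sum_{s'}\hat{p}^t_h(s'|s,a)\big(\hat{V}^{t,\pi}_{h+1}(s' ; r) - V^{\pi}_{h+1}(s' ; r)\big)$ and a \emph{statistical term} $\gamma\sum_{s'}\big(\hat{p}^t_h(s'|s,a) - p_h(s'|s,a)\big)V^{\pi}_{h+1}(s' ; r)$, then bound each in absolute value by the triangle inequality.

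For the propagation term, since $\pi$ is deterministic I would use $\hat{V}^{t,\pi}_{h+1}(s' ; r) = \hat{Q}^{t,\pi}_{h+1}(s',\pi_{h+1}(s') ; r)$ and likewise for $V$, so that $|\hat{V}^{t,\pi}_{h+1}(s' ; r) - V^{\pi}_{h+1}(s' ; r)| = \hat{e}_{h+1}^{t,\pi}(s',\pi_{h+1}(s') ; r)$; the induction hypothesis bounds this by $\bar{E}_{h+1}^t(s',\pi_{h+1}(s')) \leq \max_b \bar{E}_{h+1}^t(s',b)$, reproducing the propagation part of \eqref{def:bound}. Passing from the policy-selected action to the maximizing $b$ is exactly what makes the final bound uniform in $\pi$. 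For the statistical term, I would apply H\"older's inequality to get $\|\hat{p}^t_h(\cdot|s,a) - p_h(\cdot|s,a)\|_1 \,\|V^{\pi}_{h+1}(\cdot ; r)\|_{\infty}$, bound the value by its range $\|V^{\pi}_{h+1}(\cdot ; r)\|_{\infty} \leq \sigma_{H-h}$ (valid for any reward bounded in $[0,1]$ — this reward-independence is the crux), and control the $\ell_1$ deviation by Pinsker's inequality $\|\hat{p}-p\|_1 \leq \sqrt{2\KL(\hat{p}\|p)}$ together with the definition of $\cE$, which yields $\sqrt{2\beta(n_h^t(s,a),\delta)/n_h^t(s,a)}$. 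Summing the two contributions reproduces the second argument of the minimum in \eqref{def:bound}; and since $\hat{Q}^{t,\pi}_h - Q^{\pi}_h$ equals $\gamma$ times the difference of two quantities each lying in $[0,\sigma_{H-h}]$, it is at most $\gamma\sigma_{H-h}$ in absolute value, so the trivial first argument of the minimum holds as well. Taking the minimum closes the induction.

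The step I expect to demand the most care is the statistical term: one must verify that the value-range bound $\sigma_{H-h}$ holds \emph{simultaneously} for every reward function and policy (not just in expectation for a fixed $r$), and that Pinsker's inequality combined with the KL-ball defining $\cE$ delivers precisely the $\ell_1$ radius that appears in \eqref{def:bound}. The remainder is routine bookkeeping in the backward recursion.
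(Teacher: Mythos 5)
Your proposal is correct and follows essentially the same route as the paper: the identical Bellman-difference decomposition (the cross term you add and subtract, $\gamma\sum_{s'}\hat{p}^t_h(s'|s,a)V^{\pi}_{h+1}(s';r)$, coincides with the paper's $\gamma\sum_{s'}\hat{p}^t_h(s'|s,a)Q^{\pi}_{h+1}(s',\pi(s');r)$ since $\pi$ is deterministic), the same H\"older--Pinsker--$\cE$ control of the statistical term with the range bound $\sigma_{H-h}$, the same passage to $\max_b$ to make the bound policy-uniform, and the same backward induction closed by the trivial $\gamma\sigma_{H-h}$ branch of the minimum (which also handles $n_h^t(s,a)=0$). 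No gaps.
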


\begin{proof} From the Bellman equations in the empirical MDP and the true MDP, 
\begin{eqnarray*}
 \hat{Q}_h^{t,\pi}(s,a; r) & = & r_h(s,a) + \gamma \sum_{s'} \hat{p}_h^{t}(s' | s,a)  \hat{Q}_{h+1}^{t,\pi}(s',\pi(s'); r)\\
\text{and } \ {Q}_h^{\pi}(s,a ; r) & = & r_h(s,a) + \gamma \sum_{s'} {p}_h(s' | s,a)  {Q}_{h+1}^{\pi}(s',\pi(s'); r)\;.
\end{eqnarray*}
Hence
\begin{eqnarray*}
\hat{Q}_h^{t,\pi}(s,a ; r) - {Q}_h^{\pi}(s,a ; r) &=& \gamma \sum_{s'} \left(\hat p_h^t(s'|s,a) - p_h(s'|s,a)\right){Q}_{h+1}^{\pi}(s',\pi(s'); r) \\ && + \gamma\sum_{s'} \hat{p}_h^{t}(s' |s,a) \left( \hat{Q}_{h+1}^{t,\pi}(s',\pi(s'); r) - {Q}_{h+1}^{\pi}(s',\pi(s') ; r)\right).
\end{eqnarray*}
It follows that, for $n_h^t(s,a)>0$, using successively that ${Q}_{h+1}^{\pi}(s',a'; r)\leq \sigma_{H-h}$, the definition of event $\cE$ and Pinsker's inequality,
\begin{align*}
  \hat{e}_h^{t,\pi} (s,a ; r) &\leq \gamma \sum_{s'} \left| \hat{p}_h^t(s'|s,a) - p_h(s'|s,a)\right|{Q}_{h+1}^{\pi}(s',\pi(s'); r) \\
   & \qquad\qquad + \gamma\sum_{s'} \hat{p}_h^{t}(s' |s,a)  \left| \hat Q_{h+1}^{t,\pi}(s',\pi(s') ; r) - Q^{\pi}_{h+1}(s',\pi(s') ; r)\right| \\
& \leq  \gamma \sigma_{H-h}  \left\| \hat{p}_h^t(\cdot|s,a) - p_h(\cdot|s,a)\right\|_1 + \gamma\sum_{s'} \hat{p}_h^{t}(s' |s,a) \hat{e}_{h+1}^{t,\pi} (s',\pi(s') ; r)\\
&\leq   \gamma\sigma_{H-h}\sqrt{\frac{2\beta(n_h^t(s,a),\delta)}{n_h^t(s,a)}} +\gamma\sum_{s'} \hat{p}_h^{t}(s' |s,a) \hat{e}_{h+1}^{t,\pi} (s',\pi(s') ; r)\,.
\end{align*}
Then, noting that $\hat{e}_h^{t,\pi} (s,a ; r)\leq \gamma \sigma_{H-h}$, it holds for all $n_h^t(s,a)\geq 0$,
\begin{equation}
  \label{eq:upper_bound_he}
\hat{e}_h^{t,\pi} (s,a ; r)\! \leq \min\!\left(\!\gamma\sigma_{H-h},\gamma\sigma_{H-h}\sqrt{\frac{2\beta(n_h^t(s,a),\delta)}{n_h^t(s,a)}} +\gamma\!\sum_{s'}\! \hat{p}_h^{t}(s' |s,a) \hat{e}_{h+1}^{t,\pi} (s',\pi(s') ; r)\!\right)\!.
\end{equation}

We can now prove the result by induction on $h$. The base case for $H+1$ is trivially true since $\hat{e}_{H+1}^{t,\pi} (s,a ; r)=\bar{E}_{H+1}^t(s,a)=0$ for all $(s,a)$. Assume the result true for step $h+1$, using~\eqref{eq:upper_bound_he} we get for all $(s,a)$,
\begin{align*}
  \hat{e}_h^{t,\pi} (s,a ; r) &\leq\min\!\!\left(\gamma\sigma_{H-h},\, \gamma\sigma_{H-h}\sqrt{\frac{2\beta(n_h^t(s,a),\delta)}{n_h^t(s,a)}} +\gamma\sum_{s'} \hat{p}_h^{t}(s' |s,a) \hat{e}_{h+1}^{t,\pi} (s',\pi(s') ; r)\right)\\
  &\leq \min\!\!\left(\gamma\sigma_{H-h},\, \gamma\sigma_{H-h}\sqrt{\frac{2\beta(n_h^t(s,a),\delta)}{n_h^t(s,a)}} +\gamma\sum_{s'} \hat{p}_h^{t}(s' |s,a) \max_{b\in\cA}\bar{E}^t_{h+1}(s',b)\right)\\
  &= \bar{E}^t_{h+1}(s,a)\,.
\end{align*}

\end{proof}

\paragraph{Sampling rule and stopping rule} The idea of \OurAlgorithm{} is to uniformly reduce the estimation error all policies under all possible reward functions by being greedy with respect to the upper bounds $\bar{E}^{t}$ on these errors.  \OurAlgorithm{} stops when the error in step $1$ is smaller than $\varepsilon/2$:
\begin{itemize}
 \item \textbf{sampling rule}: the policy $\pi^{t+1}$ is the greedy policy with respect to $\bar E^{t}(s,a)$, that is
  \[\forall s \in \cS, \forall h \in [h], \ \ \pi^{t+1}_h(s) = \text{argmax}_a \bar E_h^{t}(s,a).\]
 \item \textbf{stopping rule}: $\tau = \inf \left\{ t\in \N : \bar E_h^{t}(s_1,\pi_1^{t+1}(s_1)) \leq \varepsilon/2 \right\}$.
\end{itemize}

This algorithm is very similar to the one originally proposed by \citet{Fiechter94} for Best Policy Identification in the discounted case. The main difference is that the original algorithm additionally uses some scaling and rounding: the index used are integers, defined as $\tilde E_h^{t}(s,a) = \lfloor \bar E_{h}^{t}(s,a) / \eta \rfloor$ for some parameter $\eta>0$, and the algorithm stops when $\tilde E_h^{t}(s,a)$ is smaller than a slightly different threshold. The reason for this discretization is the use of a combinatorial argument in the sample complexity analysis, which says that every $m$ time steps (with $m$ that is a function of $S,A,\delta$ and $\varepsilon$), at least one of the indices must decrease. The other difference is that the term $\sigma_{H-h} \sqrt{2{\beta(n_h^{t}(s,a),\delta)}/{n_h^ {t}(s,a)}}$ in \eqref{def:bound} is replaced by $\sigma_{H-h}\sqrt{2\log\left({2SAH}/{\delta}\right)}$, which we believe is not enough guarantee the corresponding index to be a high-probability upper bounds on $\hat{e}^{t,\pi}(s,a ; r)$\footnote{There are some missing union bounds in the concentration argument given by \citet{Fiechter94}.}.

In the next section, we propose a different analysis for \OurAlgorithm{} compared to the original analysis of \cite{Fiechter94}, which yields an improved sample complexity in the more general discounted episodic setting. 

\section{Theoretical Guarantees for \OurAlgorithm{}}\label{sec:analysis}

We show that \OurAlgorithm{} is $(\varepsilon,\delta)$-PAC for reward-free exploration and provide a high-probability upper bound on its sample complexity.

\subsection{Correctness and Sample Complexity}

First, for every reward function $r$, one can easily show (see Appendix~\ref{proof:inclusion}) that for all $t\in \N^*$, 
\begin{equation}\left\{ \forall \pi, \left|\hat V_1^{t,\pi}(s_1; r) - V_1^{\pi}(s_1; r) \right| \leq \varepsilon/2\right\} \subseteq \left\{ V_1^{\star}(s_1;r) - V_1^{\hat{\pi}^\star_{t,r}}(s_1;r) \leq \varepsilon\right\}.\label{estimation_to_value}\end{equation}
This property is already used in Lemma 3.6 of \cite{Jin20RewardFree}, where an extra planning error is allowed, whereas we assume that the optimal policy $\hat{\pi}^\star_{t,r}$ in $(\hat{P}^{t},r)$ is computed exactly (with backward induction). Hence, a sufficient condition to prove the correctness of \OurAlgorithm{} is to establish that, when it stops, the estimation errors for all policies \emph{and all reward functions} is smaller than $\varepsilon/2$. But the stopping rule of \OurAlgorithm{} is precisely designed to achieve this property.

\begin{lemma}[correctness] \label{lem:correctness} On the event $\cE$, for any reward function $r$, $   V_1^{\star}(s_1) - V_1^{\hat{\pi}^\star_{\tau,r}}(s_1) \leq \varepsilon$.
\end{lemma}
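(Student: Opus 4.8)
The plan is to combine the two ingredients that the excerpt has already laid out: the deterministic inclusion~\eqref{estimation_to_value} that converts a uniform estimation-error guarantee into a value-suboptimality guarantee, and the optimism Lemma~\ref{lem:optimism} that controls the estimation error of \emph{any} policy under \emph{any} reward function by the reward-free upper bound $\bar E^t$. The whole argument is carried out on the event $\cE$, so probabilistic statements are deferred; here we only need a pointwise (worst-case) chain of inequalities valid on $\cE$.

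First I would translate the stopping rule into a statement about the estimation error in step~$1$. By definition of $\tau$, we have $\bar E_1^{\tau}(s_1,\pi_1^{\tau+1}(s_1)) \leq \varepsilon/2$, and since $\pi^{\tau+1}$ is greedy with respect to $\bar E^\tau$, the quantity $\pi_1^{\tau+1}(s_1)$ achieves $\max_a \bar E_1^\tau(s_1,a)$. Hence $\max_a \bar E_1^{\tau}(s_1,a) \leq \varepsilon/2$. This is the key consequence of the stopping rule: at time $\tau$ the reward-free bound at the initial state is below $\varepsilon/2$ for \emph{every} action, not merely for the greedy one.

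Next I would invoke Lemma~\ref{lem:optimism}: on $\cE$, for any policy $\pi$ and any reward function $r$, $\hat e_1^{\tau,\pi}(s_1,a;r) \leq \bar E_1^{\tau}(s_1,a)$. Taking $a=\pi_1(s_1)$ and combining with the previous step gives, for every $\pi$ and every $r$,
\[
\bigl|\hat V_1^{\tau,\pi}(s_1;r) - V_1^{\pi}(s_1;r)\bigr| = \hat e_1^{\tau,\pi}(s_1,\pi_1(s_1);r) \leq \max_a \bar E_1^{\tau}(s_1,a) \leq \varepsilon/2,
\]
where the first equality uses that the value function is the $Q$-value evaluated at the action prescribed by $\pi$. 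Thus the left-hand event of the inclusion~\eqref{estimation_to_value} holds at $t=\tau$ for the reward function $r$. Applying~\eqref{estimation_to_value} then yields $V_1^{\star}(s_1;r) - V_1^{\hat\pi^\star_{\tau,r}}(s_1;r) \leq \varepsilon$, which is exactly the claim; since $r$ was arbitrary this holds for all reward functions simultaneously on~$\cE$.

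The argument is essentially a bookkeeping chain, so I do not expect a genuine obstacle, but the one point requiring care is the uniformity over $\pi$ and $r$: the inclusion~\eqref{estimation_to_value} demands the estimation error to be small for \emph{all} policies $\pi$, so it is essential that Lemma~\ref{lem:optimism} bounds $\hat e^{\tau,\pi}_1$ by the \emph{policy-independent} quantity $\bar E_1^{\tau}$, and that the stopping rule controls $\max_a \bar E_1^\tau(s_1,a)$ rather than the value at a single action. I would make sure to pass from the greedy-action bound to the all-actions bound explicitly, since an $\varepsilon/2$ bound only at the greedy action would not suffice to feed~\eqref{estimation_to_value}; the greedy choice is precisely what guarantees it is the maximizing action, closing this gap.
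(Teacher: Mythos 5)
Your proof is correct and follows exactly the paper's own argument: translate the stopping rule into $\max_a \bar E_1^{\tau}(s_1,a)\leq \varepsilon/2$ via greediness of $\pi^{\tau+1}$, invoke Lemma~\ref{lem:optimism} to get the uniform-over-$(\pi,r)$ estimation-error bound at $s_1$, and conclude via the inclusion~\eqref{estimation_to_value}. No differences worth noting.
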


\begin{proof} By definition of the stopping rule, $\bar{E}_1^{\tau}(s_1,\pi_1^{\tau+1}(s_1)) \leq \varepsilon/2$. As $\pi^{\tau+1}$ is the greedy policy w.r.t. $\bar E^{\tau}$, this implies that for all $a \in \cA$, $\bar{E}_1^{\tau}(s_1,a) \leq \varepsilon/2$. Hence, by  Lemma~\ref{lem:optimism} on the event $\cE$, for all policy $\pi$,  all reward function $r$, and all action $a$,  $\hat{e}_1^{\tau,\pi}(s_1,a ; r) \leq \varepsilon/2$. In particular, for all $\pi$ and $r$,  $|\hat{V}_1^{\tau,\pi}(s_1 ; r) - {V}_1^{\pi}(s_1 ; r)| \leq \varepsilon/2$, and the conclusion follows from the implication \eqref{estimation_to_value}.
\end{proof}

We now state our main results for \OurAlgorithm{}. We prove that for a well-chosen calibration of the threshold $\beta(n,\delta)$, the algorithm is $(\varepsilon,\delta)$-PAC for reward-free exploration and we provide a high-probability upper bound on its sample complexity.

\begin{theorem}\label{thm:sc} \OurAlgorithm{} using threshold $\beta(n,\delta) = \log\!\big(2SAH/\delta\big) + (S-1)\log\big(e(1+n/(S-1))\big)$
is $(\varepsilon,\delta)$-PAC for reward-free exploration. Moreover, with probability $1-\delta$, 
{\footnotesize
\begin{align*}
\tau \leq \!\frac{\cC_{H}SA}{\varepsilon^2}
	\!\spa{
		\log\!\pa{\! \frac{2SAH}{\delta}\! }
		\!+ 2(S\!-\!1)\!\log\!\pa{\!
			\frac{\cC_{H}SA}{\varepsilon^2}\!
			\pa{\!
				\log\!\pa{\! \frac{2SAH}{\delta} \!} \! + (S\!-\!1)\pa{\!\sqrt{e} + \sqrt{\frac{e}{S-1}}}
			\!}\!
		} \!+ (S\!-\!1)
	}
\end{align*}
}
where  $\cC_{H} = 144(1+\sqrt{2})^2\sigma_H^4$.
\end{theorem}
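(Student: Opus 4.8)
The correctness part is already settled by Lemma~\ref{lem:correctness}, so the plan is to concentrate on the high-probability bound on $\tau$, working throughout on the event $\cE$ of Lemma~\ref{lem:optimism} (together with an auxiliary concentration event, introduced below, whose probability the calibration of $\beta$ must also keep at least $1-\delta$ after a union bound). Write $b_h^t(s,a) = \gamma\sigma_{H-h}\sqrt{2\beta(n_h^t(s,a),\delta)/n_h^t(s,a)}$ for the bonus appearing in~\eqref{def:bound}. The starting point is the stopping rule: for every $t<\tau$ one has $\bar E_1^t(s_1,\pi_1^{t+1}(s_1)) > \varepsilon/2$, so summing over episodes gives the lower bound $\tfrac{\varepsilon}{2}\,\tau < \sum_{t=0}^{\tau-1}\bar E_1^t(s_1,\pi_1^{t+1}(s_1))$. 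The whole argument then consists in upper bounding the right-hand side by a quantity of order $\sigma_H^2\sqrt{SA\,\beta\,\tau}$ and solving the resulting inequality for $\tau$.

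The first key step is to transfer the error bound, which is defined through the empirical kernel $\hat p^t$, into an expectation under the true kernel $p$. Since $\pi^{t+1}$ is greedy for $\bar E^t$, unrolling the recursion~\eqref{def:bound} expresses $\bar E_1^t(s_1,\pi_1^{t+1}(s_1))$ as a discounted sum of bonuses $b_h^t$ along trajectories of $\pi^{t+1}$ in the empirical MDP. On $\cE$, Pinsker's inequality controls $\|\hat p_h^t-p_h\|_1 \le \sqrt{2\beta(n_h^t,\delta)/n_h^t}$, and using $\max_b\bar E_{h+1}^t(\cdot,b)\le\gamma\sigma_{H-h-1}$ together with the identity $\sigma_{H-h}=1+\gamma\sigma_{H-h-1}$ shows that at each step the transition-estimation error is itself bounded by $b_h^t$. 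Propagating this through the recursion, I would prove by backward induction that $\bar E_1^t(s_1,\pi_1^{t+1}(s_1)) \le \bE_{p,\pi^{t+1}}\big[\sum_{h=1}^H \gamma^{h-1}\,g_h^t(s_h,a_h)\big]$, where $g_h^t=\min(\gamma\sigma_{H-h},\,2b_h^t)$ is a clipped bonus and the expectation is now over the genuine trajectory distribution. This matters because the visitation probabilities $p_h^{t+1}(s,a)=\bP_{p,\pi^{t+1}}(s_h=s,a_h=a\mid s_1)$ are exactly the conditional means of the count increments actually collected in episode $t+1$.

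The second key step converts these expected bonuses into a function of the realized counts $n_h^\tau$. Introducing the pseudo-counts $\bar n_h^t(s,a)=\sum_{i\le t}p_h^{i}(s,a)$, I would invoke a Freedman/Bernstein-type martingale concentration---the auxiliary event mentioned above---to show that uniformly over $(s,a,h,t)$ the realized counts dominate a constant fraction of the pseudo-counts, so that $\sum_{t<\tau}p_h^{t+1}(s,a)\,\min(1,\sqrt{2\beta/n_h^t(s,a)})$ is, up to lower-order terms, at most $(1+\sqrt2)\sqrt{2\beta\,n_h^\tau(s,a)}$; this is the origin of the factor $(1+\sqrt2)$ in $\cC_H$. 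Summing the clipped-bonus inequality over $t<\tau$, pulling out $\gamma^{h-1}\gamma\sigma_{H-h}\le\sigma_H$, and applying Cauchy--Schwarz over $(s,a)$ with $\sum_{s,a}n_h^\tau(s,a)=\tau$ then yields $\sum_{t=0}^{\tau-1}\bar E_1^t(s_1,\pi_1^{t+1}(s_1)) \le c\,\sigma_H^2\sqrt{2\beta(\tau,\delta)\,SA\,\tau}$ for an explicit constant $c$, where monotonicity of $\beta$ in its first argument lets me replace each $n_h^t$ by $\tau$ inside the threshold.

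Combining with the lower bound $\tfrac{\varepsilon}{2}\tau$ and squaring gives $\tau \le \big(4c^2\sigma_H^4 SA/\varepsilon^2\big)\,2\beta(\tau,\delta)$, a self-referential inequality since $\beta(\tau,\delta)=\log(2SAH/\delta)+(S-1)\log(e(1+\tau/(S-1)))$ still contains $\tau$. Resolving it with the elementary fact that $x\le a+b\log x$ forces a bound on $x$ of the form $x\le a+b\log(a+b)$ up to absolute constants reproduces exactly the stated expression, with $\cC_H=144(1+\sqrt2)^2\sigma_H^4$ after tracking all constants. I expect the main obstacle to be the count-concentration step: making the passage from the expected bonuses $\sum_t p_h^{t+1}\,\min(1,\sqrt{\beta/n_h^t})$ (driven by pseudo-counts) to the realized counts $n_h^\tau$ fully rigorous and uniform in $t$, since there the random counts appear both as the summation index and inside the square root, and one must ensure the auxiliary event holds jointly with $\cE$ with probability at least $1-\delta$. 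The remaining ingredient, namely $\bP(\cE)\ge 1-\delta$ under the prescribed $\beta$, follows from a standard time-uniform KL concentration (Laplace/method-of-mixtures) for categorical distributions, which is precisely what the $(S-1)\log(e(1+n/(S-1)))$ term in $\beta$ accounts for.
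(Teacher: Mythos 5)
Your proposal follows essentially the same route as the paper: the stopping-rule lower bound $\varepsilon\tau/2$, the transfer of the recursion~\eqref{def:bound} from the empirical to the true kernel via Pinsker on $\cE$ (the paper's Lemma~\ref{lem:induction} and the averaged errors $q_h^t$), the pseudo-count concentration event and the Lemma~19-of-UCRL-style summation giving the $(1+\sqrt2)\sqrt{SA\,\beta\,\tau}$ bound, and the final inversion of the self-referential inequality. The step you flag as the main obstacle is handled in the paper exactly as you sketch it (a Bernoulli-martingale bound giving $n_h^t \ge \tfrac12\bar n_h^t - \log(2SAH/\delta)$, then a two-case comparison of $\beta(n)/n$ with $\beta(\bar n)/(\bar n\vee 1)$), so there is no gap.
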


From Theorem~\ref{thm:sc}, the number of episodes of exploration needed is of order
\[\frac{SAH^4}{\varepsilon^2}\ln\left(\frac{2SAH}{\delta}\right) +  \frac{S^2AH^4}{\varepsilon^2}\ln\left(\frac{SAH^4}{\varepsilon^2}\ln\left(\frac{2SAH}{\delta}\right)\right),\]
up to (absolute) multiplicative constants. As explained in Appendix~\ref{app:stationary}, for stationary transitions, we can further replace $H^4$ by $H^3$ in this bound. We now examine the scaling of this bound when $\varepsilon$ goes to zero and when $\delta$ goes to zero. In a regime of small $\varepsilon$, our $\tilde{\cO}\left({S^2AH^4}/{\varepsilon^2}\right)$ bound improves the dependency in $H$ compared to the one given by \cite{Jin20RewardFree} from $H^5$ to $H^4$ (and to $H^3$ for stationary transitions). This new bound is matching the lower bound of \cite{Jin20RewardFree} up to a factor $H^2$ (and a factor $H$ for stationary transitions). Then, in a regime small $\delta$, the sample complexity of \OurAlgorithm{} scales in $\cO\left(({SAH^4}/{\varepsilon^2})\ln\left({1}/{\delta}\right)\right)$, which greatly improves over the $\cO(({S^4AH^7}/{\varepsilon})\left(\ln\left({1}/{\delta}\right)\right)^3)$ scaling of the algorithm of \cite{Jin20RewardFree}. Finally, we note that our result also improves over the original sample complexity bound given by \cite{Fiechter94}, which is in  $\tilde\cO\left(({S^2A}/(1 - \gamma)^7\varepsilon^3)) \log\left({SA}/((1-\gamma)\delta)\right)\right)$ in the discounted setting (for which $H \sim 1/(1-\gamma)$).

\subsection{Proof of Theorem~\ref{thm:sc}}

We first introduce a few notation. We let $p_h^{\pi}(s,a)$ be the probability that the state action pair $(s,a)$ is reached in the $h$-th step of a trajectory generated under the policy $\pi$, 
and we use the shorthand $p_h^t(s,a)= p_h^{\pi^t}(s,a)$.
We introduce the \emph{pseudo-counts} $\bar n_h^{t}(s,a) = \sum_{i=1}^{t} p_h^i(s,a)$ and define
\[
\cE^{\text{cnt}} =\left\{ \forall t \in \N^\star, \forall h\in [H],\forall (s ,a)\in\cS\times\cA:\ n_h^t(s,a) \geq \frac{1}{2}\bar n_h^t(s,a)-\beta^{\text{cnt}}(\delta)  \right\}\,,
\]
where $\beta^{\text{cnt}}(\delta) = \log\big(2SAH/\delta\big)$. Recalling the event $\cE$ defined in Lemma~\ref{lem:optimism}, we let $\cF = \cE \cap \cE^{\text{cnt}}$. Lemma~\ref{lem:concentration} in Appendix~\ref{app:concentration_events} shows that $\bP(\cE) \geq 1 - \delta /2$ and $\bP(\cE^{\text{cnt}}) \geq 1- \delta /2$, which yields $\bP(\cF) \geq 1 -\delta$. From Lemma~\ref{lem:correctness}, on the event $\cF$,  it holds that $ V_1^{\star}(s_1) - V_1^{\hat{\pi}^\star_{\tau,r}}(s_1) \leq \varepsilon$ for all reward function $r$, which proves that \OurAlgorithm{} is $(\varepsilon,\delta)$-PAC.

We now upper bound the sample complexity of \OurAlgorithm{} on the event $\cF$, postponing the proof of some intermediate lemmas to Appendix~\ref{app:main_thm}. The first step is to introduce an average upper bound on the error at step $h$ under policy $\pi^{t+1}$ defined as
\[q_{h}^{t} = \sum_{(s,a)} p_{h}^{t+1}(s,a) \bar{E}_h^{t}(s,a).\]
The following crucial lemma permits to relate the errors at step $h$ to that at step $h+1$.

\begin{lemma}\label{lem:induction} On the event $\cE$, for all $h \in [H]$ and $(s,a) \in \cS \times \cA$,
 \[\bar{E}_h^{t}(s,a) \leq 3\sigma_{H-h}\left[\sqrt{\frac{\beta(n_h^t(s,a),\delta)}{n_h^t(s,a)}} \wedge 1 \right] + \gamma\sum_{s'\in \cS} p_{h}(s'| s,a)\bar{E}_{h+1}^{t}(s',\pi^{t+1}(s'))\;.\]
\end{lemma}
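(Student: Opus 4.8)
The plan is to start from the recursive upper bound on $\bar{E}_h^t(s,a)$ furnished by the definition~\eqref{def:bound} and to massage it into the claimed inequality through two operations: replacing the maximum over actions by the value under the greedy policy $\pi^{t+1}$, and changing the reference measure from the empirical kernel $\hat{p}_h^t$ to the true kernel $p_h$. Throughout I work on the event $\cE$, which is exactly what licenses the concentration step, reusing the ingredients already assembled in the proof of Lemma~\ref{lem:optimism}.

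First I would dispose of the easy regime where $\sqrt{\beta(n_h^t(s,a),\delta)/n_h^t(s,a)} \geq 1$ (in particular $n_h^t(s,a)=0$, under the convention $1/0=+\infty$). There the bracket $[\,\sqrt{\beta/n}\wedge 1\,]$ equals $1$, and the first argument of the minimum in~\eqref{def:bound} already gives $\bar{E}_h^t(s,a)\leq \gamma\sigma_{H-h}\leq 3\sigma_{H-h}$, so the inequality holds trivially since the transition term on the right-hand side is nonnegative. It then remains to handle the regime $\sqrt{\beta/n}\leq 1$ with $n_h^t(s,a)>0$, where $\sqrt{\beta/n}\wedge 1=\sqrt{\beta/n}$, by keeping the second argument of the minimum.

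In that main regime, because $\pi^{t+1}$ is greedy with respect to $\bar{E}^t$, I have the exact identity $\max_{b}\bar{E}_{h+1}^t(s',b)=\bar{E}_{h+1}^t(s',\pi^{t+1}(s'))$; write $W(s')$ for this quantity. I would then split
\[
\sum_{s'}\hat{p}_h^t(s'|s,a)\,W(s') \;=\; \sum_{s'}p_h(s'|s,a)\,W(s') \;+\; \sum_{s'}\bigl(\hat{p}_h^t(s'|s,a)-p_h(s'|s,a)\bigr)W(s'),
\]
and bound the second (signed) sum by $\|W\|_\infty\,\|\hat{p}_h^t(\cdot|s,a)-p_h(\cdot|s,a)\|_1$. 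The minimum in~\eqref{def:bound} forces the uniform bound $\|W\|_\infty\leq \gamma\sigma_{H-h-1}$, while Pinsker's inequality together with the event $\cE$ gives $\|\hat{p}_h^t(\cdot|s,a)-p_h(\cdot|s,a)\|_1\leq \sqrt{2\beta(n_h^t(s,a),\delta)/n_h^t(s,a)}$, precisely as in Lemma~\ref{lem:optimism}. This produces a change-of-measure term $\gamma\sigma_{H-h-1}\sqrt{2\beta/n}$ alongside the original deviation term $\gamma\sigma_{H-h}\sqrt{2\beta/n}$.

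The final step is the constant bookkeeping, which I expect to be the only delicate point. Collecting the two terms yields a prefactor $\gamma(\sigma_{H-h}+\sigma_{H-h-1})\sqrt{2}$ in front of $\sqrt{\beta/n}$; since $\sigma_{H-h-1}\leq\sigma_{H-h}$ and $\gamma\leq 1$, this prefactor is at most $2\sqrt{2}\,\sigma_{H-h}\leq 3\sigma_{H-h}$, giving the stated bound together with the transition term $\gamma\sum_{s'}p_h(s'|s,a)\bar{E}_{h+1}^t(s',\pi^{t+1}(s'))$. Combining with the trivial regime above establishes the inequality with the $\wedge 1$ in full generality.
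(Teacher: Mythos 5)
Your proposal is correct and follows essentially the same route as the paper's proof: start from the second argument of the minimum in~\eqref{def:bound}, use the greedy identity $\max_b\bar{E}_{h+1}^t(s',b)=\bar{E}_{h+1}^t(s',\pi^{t+1}(s'))$, change measure from $\hat{p}_h^t$ to $p_h$ via H\"older plus Pinsker on $\cE$ with $\|\bar{E}_{h+1}^t\|_\infty\leq\gamma\sigma_{H-h-1}\leq\sigma_{H-h}$, and absorb the two deviation terms into $2\sqrt{2}\,\sigma_{H-h}\leq 3\sigma_{H-h}$, handling the $n_h^t(s,a)=0$ and $\sqrt{\beta/n}\geq 1$ cases trivially via the first argument of the minimum. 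The constant bookkeeping and the case split match the paper's argument.
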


Thanks to Lemma~\ref{lem:induction}, the average errors can in turn be related as follows: {\small
\begin{eqnarray} q_h^t & \leq & \!\!3\sigma_{H-h} \!\!\sum_{(s,a)}\! p_{h}^{t+1}(s,a)\!\left[\sqrt{\frac{\beta(n_h^t(s,a),\delta)}{n_h^t(s,a)}}\wedge 1 \right]\! + \gamma \!\sum_{(s,a)}\!\sum_{(s',a')}\!\!p_h^{t+1}(s,a) p_h(s' |s,a)\ind(a'\! = \! \pi^{t+1}\!(s'))  \overline{E}_{h+1}^{t}(s',a')\nonumber \\
 &\leq& \!\!3\sigma_{H-h} \!\!\sum_{(s,a)}\! p_{h}^{t+1}(s,a)\!\left[\sqrt{\frac{\beta(n_h^t(s,a),\delta)}{n_h^t(s,a)}}\wedge 1 \right]\! + \gamma q_{h+1}^{t}\;.\label{crucial}\end{eqnarray}}
 For $h=1$, observe that $p_{h}^{t+1}(s_1,a) \bar{E}_h^{t}(s_1,a) = \bar{E}_h^{t}(s_1,\pi_1^{t+1}(s_1)) \ind\left(\pi_1^{t+1}(s_1) = a\right)$, as the policy is deterministic. Now, if $t < \tau$,  $\bar{E}_h^{t}(s_1,\pi_1^{t+1}(s_1) \geq \varepsilon/2$ by definition of the stopping rule, hence
\[q_{1}^{t} = \sum_{a} p_1^{t+1}(s_1,a)\bar{E}_h^{t}(s_1,a) \geq (\varepsilon/2) \sum_{a \in \cA}\ind\left(\pi_1^{t+1}(s_1) = a\right) = \varepsilon/2.\]
Using \eqref{crucial} to upper bound $q_1^{t}$ yields
${\small
\varepsilon/2
 \leq  3 \sum_{h=1}^{H}\sum_{(s,a)} \gamma^{h-1} \sigma_{H-h}p_{h}^{t+1}(s,a)\left[\sqrt{\frac{\beta(n_h^t(s,a),\delta)}{n_h^t(s,a)}}\wedge 1\right]
}$ for $t < \tau$
and summing these inequalities for $t \in \{0,\dots,T\}$ where $T < \tau$ gives
\begin{align*}
(T+1) \varepsilon &\leq 6 \sum_{h=1}^{H} \gamma^{h-1}\sigma_{H-h}\sum_{(s,a)}\sum_{t=0}^{T} p_{h}^{t+1}(s,a)\left[\sqrt{\frac{\beta(n_h^t(s,a),\delta)}{n_h^t(s,a)}} \wedge 1 \right].
\end{align*}
The next step is to relate the counts to the pseudo-counts using the fact that the event $\cE^{\text{cnt}}$ holds. 

\begin{lemma}\label{lem:cnt_pseudo} On the event $\cE^{\text{cnt}}$, $\forall  h \in [H], (s,a) \in \cS \times \cA$,
\[ \forall t \in \N^*, \ \frac{\beta(n_h^t(s,a),\delta)}{n_h^t(s,a)}\wedge 1 \leq 4 \frac{\beta(\bar n_h^t(s,a),\delta)}{\bar n_h^t(s,a)\vee 1}\,.\]
\end{lemma}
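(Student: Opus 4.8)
The plan is to fix $h$, $s$, $a$, $t$ and abbreviate $n = n_h^t(s,a)$ and $\bar n = \bar n_h^t(s,a)$, so that on $\cE^{\text{cnt}}$ we have only the \emph{one-sided} control $n \geq \tfrac12 \bar n - \beta^{\text{cnt}}(\delta)$. The whole argument rests on two elementary monotonicity properties of the chosen threshold: first, $x \mapsto \beta(x,\delta)$ is nondecreasing; second, and more importantly, $x \mapsto \beta(x,\delta)/x$ is nonincreasing on $(0,\infty)$. I would verify the second property by differentiating: writing $\beta(x,\delta) = \beta^{\text{cnt}}(\delta) + (S-1)\log(e(1+x/(S-1)))$, one gets $\beta'(x,\delta) = (S-1)/(S-1+x)$, so that $x\,\beta'(x,\delta) = (S-1)x/(S-1+x) \leq S-1 \leq \beta(x,\delta)$, where the last inequality uses $\beta(x,\delta)\geq (S-1)\log e = S-1$; hence $(\beta/x)' = (x\beta' - \beta)/x^2 \leq 0$. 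For $S=1$ the extra term vanishes and $\beta/x = \beta^{\text{cnt}}(\delta)/x$ is trivially decreasing.

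I would then split on the size of $\bar n$ relative to the threshold $4\beta^{\text{cnt}}(\delta)$. In the \emph{large-count} regime $\bar n \geq 4\beta^{\text{cnt}}(\delta)$, the event $\cE^{\text{cnt}}$ gives $n \geq \tfrac12\bar n - \beta^{\text{cnt}}(\delta) \geq \tfrac12 \bar n - \tfrac14 \bar n = \tfrac14 \bar n > 0$. Dropping the $\wedge 1$, applying the monotonicity of $x\mapsto\beta(x,\delta)/x$ at the point $n \geq \bar n/4$, and then the monotonicity of $\beta(\cdot,\delta)$, I obtain
\[ \frac{\beta(n,\delta)}{n} \leq \frac{\beta(\bar n/4,\delta)}{\bar n/4} = 4\,\frac{\beta(\bar n/4,\delta)}{\bar n} \leq 4\,\frac{\beta(\bar n,\delta)}{\bar n}, \]
and since $\bar n \geq 4\beta^{\text{cnt}}(\delta) \geq 1$ the denominator equals $\bar n \vee 1$, which is exactly the claimed bound.

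In the \emph{small-count} regime $\bar n < 4\beta^{\text{cnt}}(\delta)$, I bound the left-hand side by $1$ using the $\wedge 1$ and show the right-hand side is at least $1$, i.e.\ $\beta(\bar n,\delta) \geq (\bar n \vee 1)/4$. This follows from $\beta(\bar n,\delta) \geq \beta(0,\delta) \geq \beta^{\text{cnt}}(\delta)$: if $\bar n \leq 1$ then $(\bar n \vee 1)/4 = 1/4 \leq \beta^{\text{cnt}}(\delta)$, while if $1 < \bar n < 4\beta^{\text{cnt}}(\delta)$ then $(\bar n\vee 1)/4 = \bar n/4 < \beta^{\text{cnt}}(\delta) \leq \beta(\bar n,\delta)$; here I use $\beta^{\text{cnt}}(\delta) = \log(2SAH/\delta) \geq \log 2 \geq 1/4$. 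Note that the case $n=0$ (where the left-hand side is $1$ by the convention $1/0=+\infty$) always falls into this regime, since $\cE^{\text{cnt}}$ then forces $\bar n \leq 2\beta^{\text{cnt}}(\delta)$.

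The only genuinely delicate point — and the reason the lemma is not completely trivial — is the numerator $\beta(n,\delta)$: because we control only a one-sided inequality, one cannot simply write $\beta(n,\delta) \leq \beta(\bar n,\delta)$, as the realized count $n$ may exceed the pseudo-count $\bar n$. This is precisely what the monotonicity of the \emph{full ratio} $x \mapsto \beta(x,\delta)/x$ circumvents, letting me transport the lower bound $n \geq \bar n/4$ directly into an upper bound on $\beta(n,\delta)/n$ without ever comparing $n$ and $\bar n$ inside $\beta$. Matching the constant $4$ in the statement then pins down the case-split threshold to be exactly $4\beta^{\text{cnt}}(\delta)$.
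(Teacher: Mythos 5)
Your proof is correct and follows essentially the same route as the paper's: the identical case split on whether $\bar n_h^t(s,a)$ exceeds $4\beta^{\text{cnt}}(\delta)$, with the large-count case handled via the monotonicity of $x\mapsto\beta(x,\delta)/x$ applied at $n\geq\bar n/4$ and the small-count case by bounding the left-hand side by $1$ and the right-hand side from below by $1$. The only addition is your explicit derivative verification that $x\mapsto\beta(x,\delta)/x$ is nonincreasing, which the paper asserts without proof; this is a worthwhile check but not a different argument.
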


Using Lemma~\ref{lem:cnt_pseudo}, one can write that, on the event $\cF$, for $T < \tau$,
\begin{align*}
 (T+1) \varepsilon &  \leq 12 \sum_{h=1}^{H} \gamma^{h-1}\sigma_{H-h}\sum_{(s,a)}\sum_{t=0}^{T} p_{h}^{t+1}(s,a)\sqrt{\frac{\beta(\bar n_h^t(s,a),\delta)}{\bar n_h^t(s,a)\vee 1}} \\
 &  \leq 12 \sqrt{\beta(T+1,\delta)}\sum_{h=1}^{H} \gamma^{h-1}\sigma_{H-h}\sum_{(s,a)}\sum_{t=0}^{T} \frac{\bar n_h^{t+1}(s,a) - \bar n_h^t(s,a)}{\sqrt{\bar n_h^t(s,a)\vee 1}},
\end{align*}
where we have used that by definition of the pseudo-counts $p_{h}^{t+1}(s,a) = \bar n_{h}^{t+1}(s,a) - \bar n_{h}^{t}(s,a)$. Using Lemma 19 of \cite{UCRL10} (recalled in Appendix~\ref{app:technical}) to upper bound the sum in $t$ yields
\begin{align*}
 (T+1) \varepsilon &  \leq 12(1+\sqrt{2}) \sqrt{\beta(T+1,\delta)}\sum_{h=1}^{H} \gamma^{h-1}\sigma_{H-h}\sum_{(s,a)}\sqrt{n_h^{T+1}(s,a)} \\
 &\leq 12(1+\sqrt{2}) \sqrt{\beta(T+1,\delta)}\sum_{h=1}^{H} \gamma^{h-1}\sigma_{H-h}\sqrt{SA}\sqrt{\sum_{s,a} n_h^{T+1}(s,a)}.
\end{align*}
As $\sum_{s,a} n_h^{T+1}(s,a) = T+1$, one obtains, using further that $\sigma_{H - h} \leq \sigma_H$,
\begin{align*}
\varepsilon \sqrt{T+1}  &  \leq 12(1+\sqrt{2})\sqrt{SA}\left(\sigma_H\sum_{h=1}^{H} \gamma^{h-1}\right)  \sqrt{\beta(T+1,\delta)}.
\end{align*}
For $T$ large enough, this inequality cannot hold, as the left hand side is in $\sqrt{T}$ while the right hand-side is logarithmic. Hence $\tau$ is finite and satisfies (applying the inequality to $T=\tau-1$)
\begin{align*}
\tau    \leq \frac{\cC_{H}SA}{\varepsilon^2}\beta(\tau,\delta),
\end{align*}
where $\cC_{H} = 144(1+\sqrt{2})^2\sigma_{H}^4$. The conclusion follows from Lemma~\ref{lemma:inversion_of_n_inequality} stated in Appendix~\ref{app:technical}.


\section{Reward-Free Exploration versus Best Policy Identification}
\label{sec:RF_to_BPI}

While originally proposed for solving the Best Policy Identification problem (see Definition~\ref{def:BPI}), we proved that \OurAlgorithm{} is $(\varepsilon,\delta)$-PAC for Reward-Free Exploration. In particular, \OurAlgorithm{} is also $(\varepsilon,\delta)$-PAC for BPI given some deterministic reward function $r$. In this section, we investigate the difference in complexity between BPI and RFE, trying to answer the following question: could an algorithm specifically designed for BPI have a smaller sample complexity than \OurAlgorithm{}?

A lower bound for BPI can be found in the work of \citet{dann15PAC} (although this work in focused on the design of PAC-MDP algorithms). This worse-case lower bound says that for any $(\epsilon,\delta)$-PAC algorithm for BPI there exists an MDP with stationary transitions for which $\bE[\tau] = \Omega\left(((SAH^2)/\varepsilon^2)\ln\left({c_2}/(\delta+c_3)\right)\right)$ for some constant $c_2$ and $c_3$. This lower bound directly translates to a lower bound for RFE, showing that for a small $\delta$ the sample complexity of \OurAlgorithm{} is optimal up to a factor $H$ for stationary rewards, for \emph{both} BPI and RFE. If one is interested in the small $\varepsilon$ regime, the lower bound of \citet{Jin20RewardFree} is more informative: it states that for any $(\epsilon,\delta)$-PAC algorithm for RFE there exists an MDP with stationary transitions for which $\bE[\tau] =  \Omega\left(((S^2AH^2)/\varepsilon^2)\right)$. Observe the increased $S^2$ factor, which may not be needed for a BPI algorithm to be optimal in the small $\varepsilon$ regime, and justifies the need to derive specific BPI algorithms.

\paragraph{BPI algorithms} To the best of our knowledge, the BPI problem has not been studied a lot since the work of \citet{Fiechter94}.  \citet{EvenDaral06} propose $(\epsilon,\delta)$-PAC algorithms that stop and output a guess for the optimal policy (in all states) for discounted MDPs, but no upper bound on their sample complexity is given. Another avenue to get an $(\varepsilon,\delta)$-PAC algorithm for BPI is to use a regret minimization algorithm: \citet{Jin18OptQL} suggests to run a regret minimization algorithm for some well chosen number of episode $K$ and to let $\hat{\pi}$ be a policy chosen at random among the $K$ policies used. Taking as a sub-routine the UCB-VI algorithm of \citet{Azar17UCBVI} that has $\cO(\sqrt{H^2SAK})$ regret for stationary rewards, with $K = \cO({H^2SA}/(\varepsilon^2\delta^2))$ this conversion yields an $(\epsilon,\delta)$-PAC for BPI. Its sample complexity has a bad scaling\footnote{{As pointed out to us, the scaling in $\delta$ can be improved to $\log^2(1/\delta)$ for a different static conversion from regret to BPI, see Appendix~\ref{app:ChiJin}}.} in $\delta$, but is optimal for BPI when $\varepsilon$ is small.

To get a better dependency in $\delta$, a first observation is that \OurAlgorithm{} can be used: being $(\varepsilon,\delta)$-PAC for RFE, it will also be $(\varepsilon,\delta)$-PAC for BPI (with a recommendation rule $\hat{\pi}_{\tau}=\hat{\pi}^\star_{\tau,r}$). Yet, the sampling rule of \OurAlgorithm{} does not leverage the knowledge of $r$, and intuitively, there is something to be gained by doing it. This is why we propose the \OurAlgorithmBPI{} algorithm, which does exploit the observation of the rewards during learning, and can be seen as an \emph{adaptive} conversion from regret to BPI. The algorithm, described in full details in Appendix~\ref{app:BPI}, equips a regret minimizing algorithm similar to the KL-UCRL algorithm of \citet{filippi2010optimism} with an adaptive stopping rule, which leverages upper and lower confidence bounds on the value functions.

\paragraph{BPI-UCRL}  Letting $\uQ_h^{t}(s,a)$ and $\lQ_h^{t}(s,a)$ be upper and lower confidence bounds on $Q^\star_h(s,a)$ that are defined in Appendix~\ref{app:BPI}, the three components of \OurAlgorithmBPI are
\begin{itemize}
  \item the \textbf{sampling rule} $\pi^{t+1}$, which is the greedy policy \emph{w.r.t.} to the upper bounds $\uQ_h^{t}(s,a)$,
  \item the \textbf{stopping rule} $\tau = \inf\big\{ t\in\N : \max_a \uQ_1^t(s_1,a)-\max_{a}\lQ_1^t(s_1,a) \leq \epsilon \big\}$ and
  \item the \textbf{recommendation rule} $\hpi_\tau$, which is the greedy policy \emph{w.r.t.} to lower bounds $\lQ_h^{\tau}(s,a)$\;.
\end{itemize}\color{black}
We prove in Theorem~\ref{thm:sc_BPI} that \OurAlgorithmBPI{} enjoys the same sample complexity guarantees than \OurAlgorithm{}, both in the small $\delta$ and the small $\varepsilon$ regime. 
Yet, as illustrated in the next section, \OurAlgorithmBPI{} appears to perform more efficient exploration than \OurAlgorithm{} for a fixed reward function. We leave as an open question whether an improved analysis for \OurAlgorithmBPI{} could corroborate this improvement (besides the slightly smaller constant $\cC_H$ in Theorem~\ref{thm:sc_BPI}).

\paragraph{Open questions} We summarize in Figure~\ref{table:12} below the best available upper and lower bounds available for RFE and BPI, when the transitions are stationary. While in RFE the \OurAlgorithm{} algorithm is optimal up to a factor $H$ in both the small $\delta$ and small $\varepsilon$ regimes, it is not clear whether an algorithm having this property exists for BPI. Figure~\ref{table:12} also shows that in the small $\varepsilon$ regime, the complexity of RFE and BPI are different as there exists an algorithm with sample complexity $SAH^2/\varepsilon^2$ for BPI while all RFE algorithms have a sample complexity that is larger than $S^2AH^2/\varepsilon^2$. In the small $\delta$ regime, designing algorithms whose sample complexity scales in $H^2$ instead of $H^3$ would allow to conclude that the complexity of the two problems in the same, at least in a worst-case sense. We leave this task as future work.


\setlength{\extrarowheight}{5pt}
\begin{figure}[h]
\begin{minipage}{0.5\linewidth}\hspace{-0.6cm}
\begin{tabular}{|c | c| c|}
 \hline
 Small $\delta$ & UB & LB  \\
 \hline\hline
 RFE &  $\frac{S A H^3}{\epsilon^2}\log\left(\frac{1}{\delta}\right)$  & $\frac{S A H^2}{\epsilon^2} \log\left(\frac{1}{\delta}\right)$ \\
 & \OurAlgorithm{} & \small \cite{dann15PAC} \hspace{-0.3cm}\\
 \hline
 BPI & $\frac{S A H^3}{\epsilon^2}\log\left(\frac{1}{\delta}\right)$ &  $\frac{S A H^2}{\epsilon^2} \log\left(\frac{1}{\delta}\right)$   \\
  & \OurAlgorithmBPI{} & \small \cite{dann15PAC} \hspace{-0.3cm} \\
  & \OurAlgorithm{} & \\
 \hline
\end{tabular}
\end{minipage}
\begin{minipage}{0.49\linewidth}\hspace{0.3cm}
\begin{tabular}{|c | c| c|}
 \hline
 Small $\epsilon$ & UB & LB  \\
 \hline\hline
 RFE &  $\frac{S^2 A H^3}{\epsilon^2}$ & $\frac{S^2 A H^2}{\epsilon^2}$ \\
  & \OurAlgorithm{} & \small \cite{Jin20RewardFree} \hspace{-0.3cm} \\
 \hline
 BPI & $\frac{S A H^2}{\epsilon^2}$ &  $\frac{S A H^2}{\epsilon^2}$   \\
  & UCB-VI & \small \cite{dann15PAC} \hspace{-0.3cm}\\
  &  & \\
 \hline
\end{tabular}
\end{minipage}
\caption{Available upper and lower bounds on the sample complexity for RFE and BPI for stationary transition kernels ($p_h(s' | s,a) = p(s' |s,a)$) in a regime of small $\delta$ (left) and small $\varepsilon$ (right)\label{table:12}}
\end{figure}

\color{black}

\section{Numerical Illustration} \label{sec:experiments}

In this section we report the results of some experiments on synthetic MDPs aimed at illustrating how \OurAlgorithm{} and \OurAlgorithmBPI{} perform exploration, compared to simple baselines:
(i) exploration with a random policy (RP) agent, and
(ii) a generative model (GM) agent, which samples a fixed number of transitions from each state-action pair.

\begin{figure}[ht]
	\floatconts{fig:results}
	{\caption{\OurAlgorithm{} and \OurAlgorithmBPI{} on the DoubleChain MDP, with parameters $L = 31$, $H=20$.}}
	{
		\subfigure[Approximation error as a function of $n$]{
			\includegraphics[trim={0 0 1cm 1cm}, clip, width=0.45\linewidth]{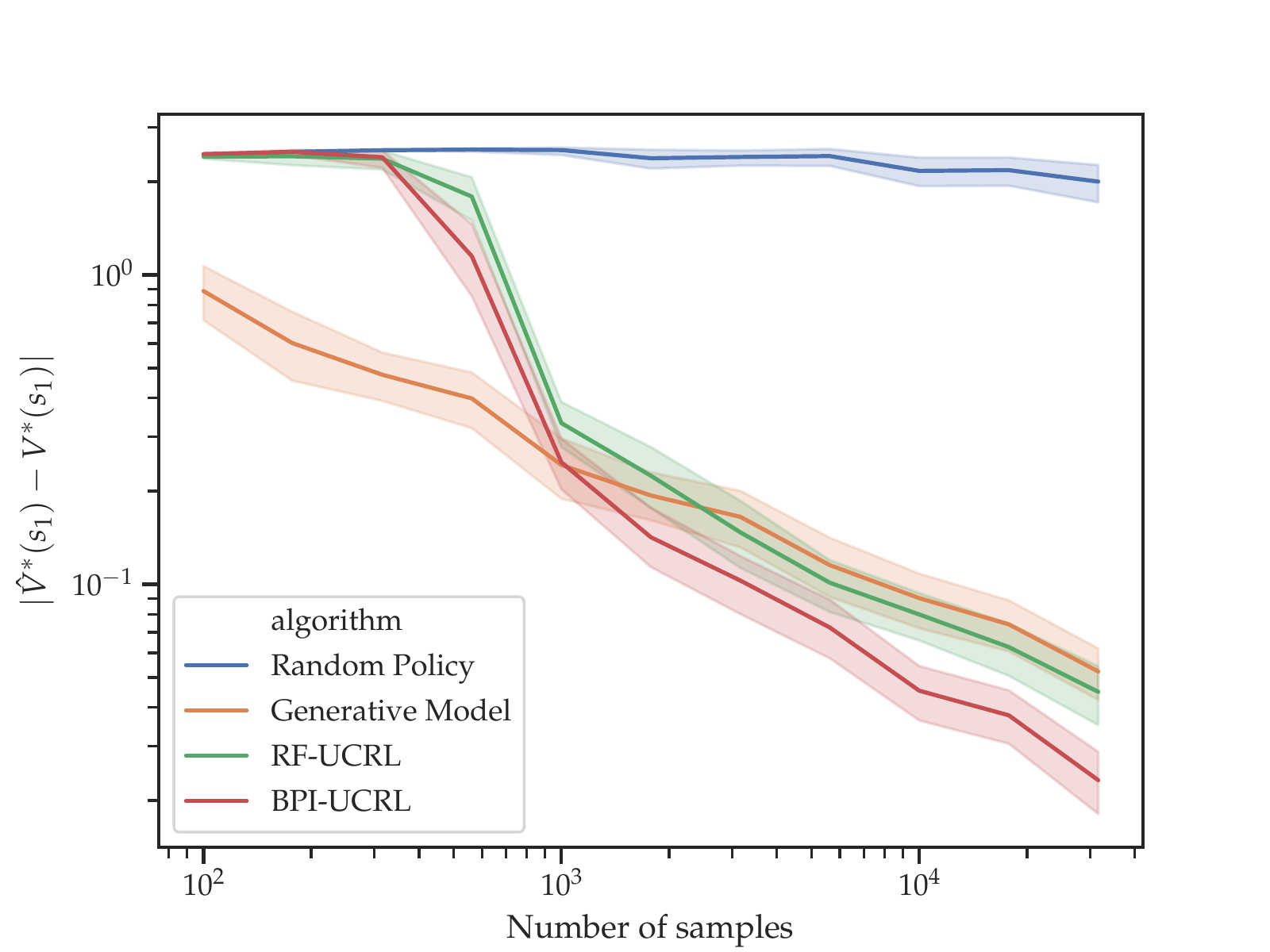}
			\label{fig:chain-error}
		}\quad
		\subfigure[Number of state visits for $n = 5000$]{
			\includegraphics[trim={0 0 1cm 1cm}, clip, width=0.45\linewidth]{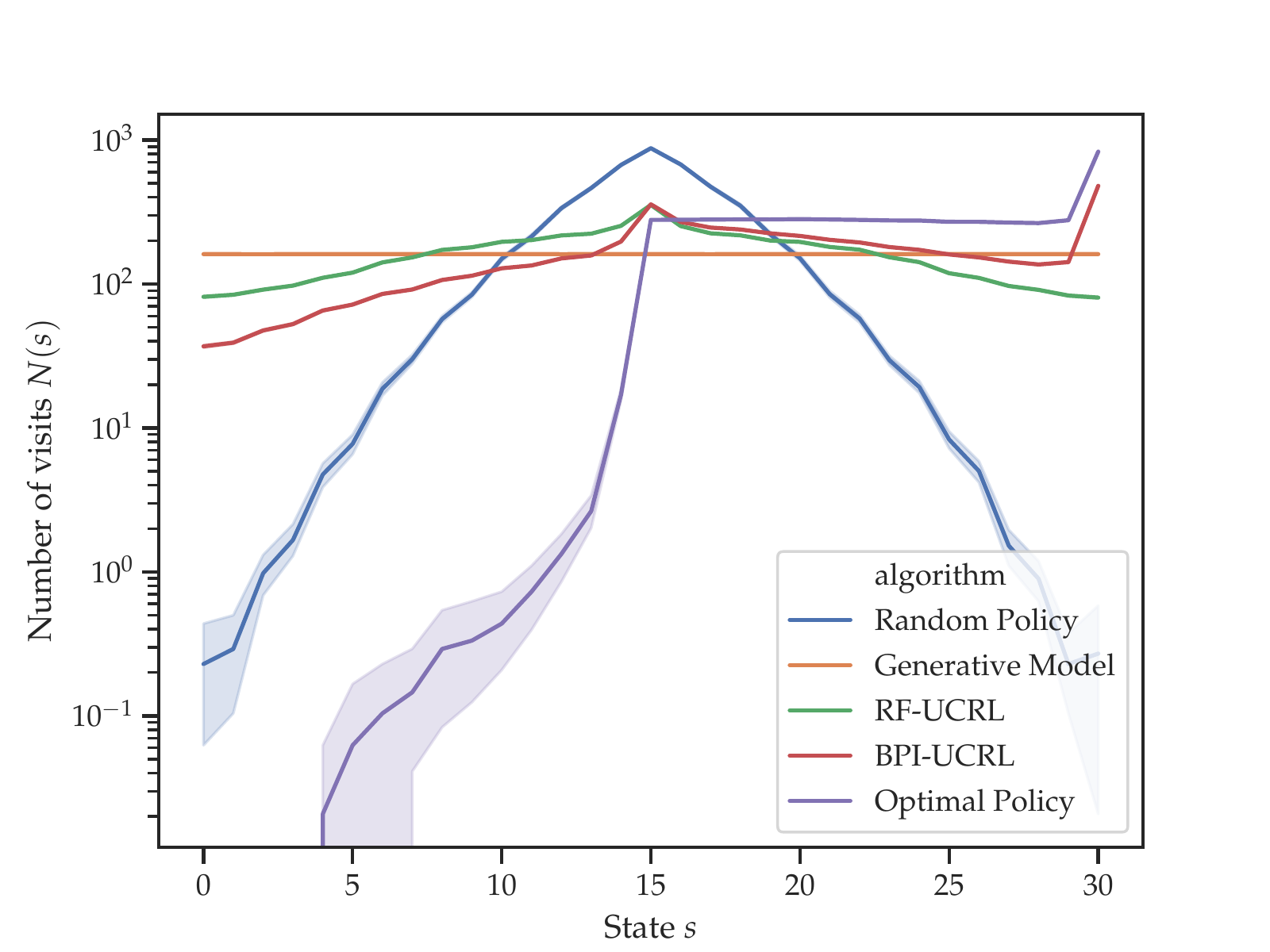}
			\label{fig:chain-occupancies}
		}
		\subfigure[{Estimation of $\bE[\tau | \tau \!< 10^8]$ for \OurAlgorithm{}}]{
			\includegraphics[trim={0 0 1cm 1cm}, clip, width=0.45\linewidth]{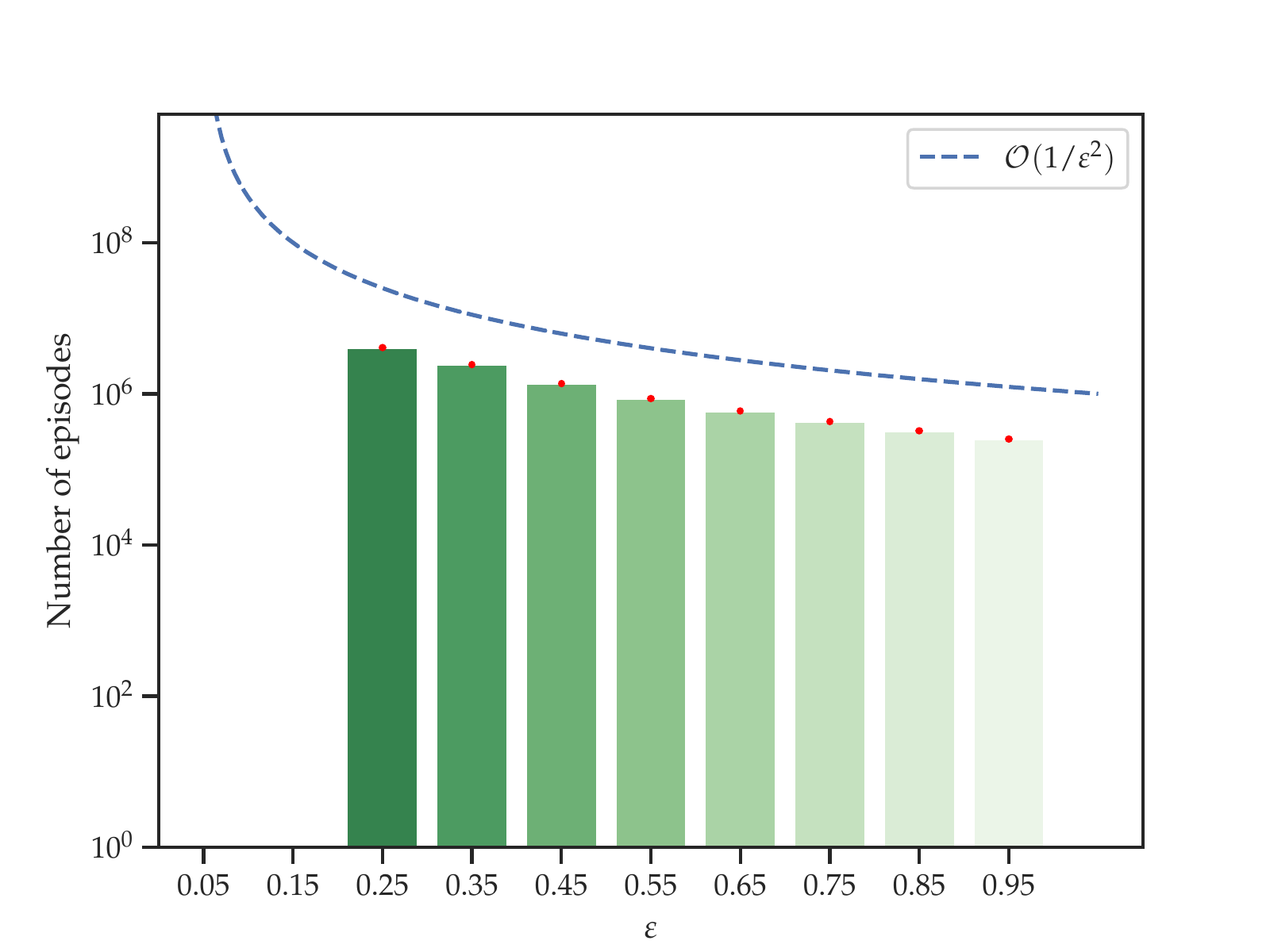}
			\label{fig:error-bins-RFE}
		}\quad
		\subfigure[{Estimation of $\bE[\tau | \tau \! < 10^6]$ for \OurAlgorithmBPI{}}]{
			\includegraphics[trim={0 0 1cm 1cm}, clip, width=0.45\linewidth]{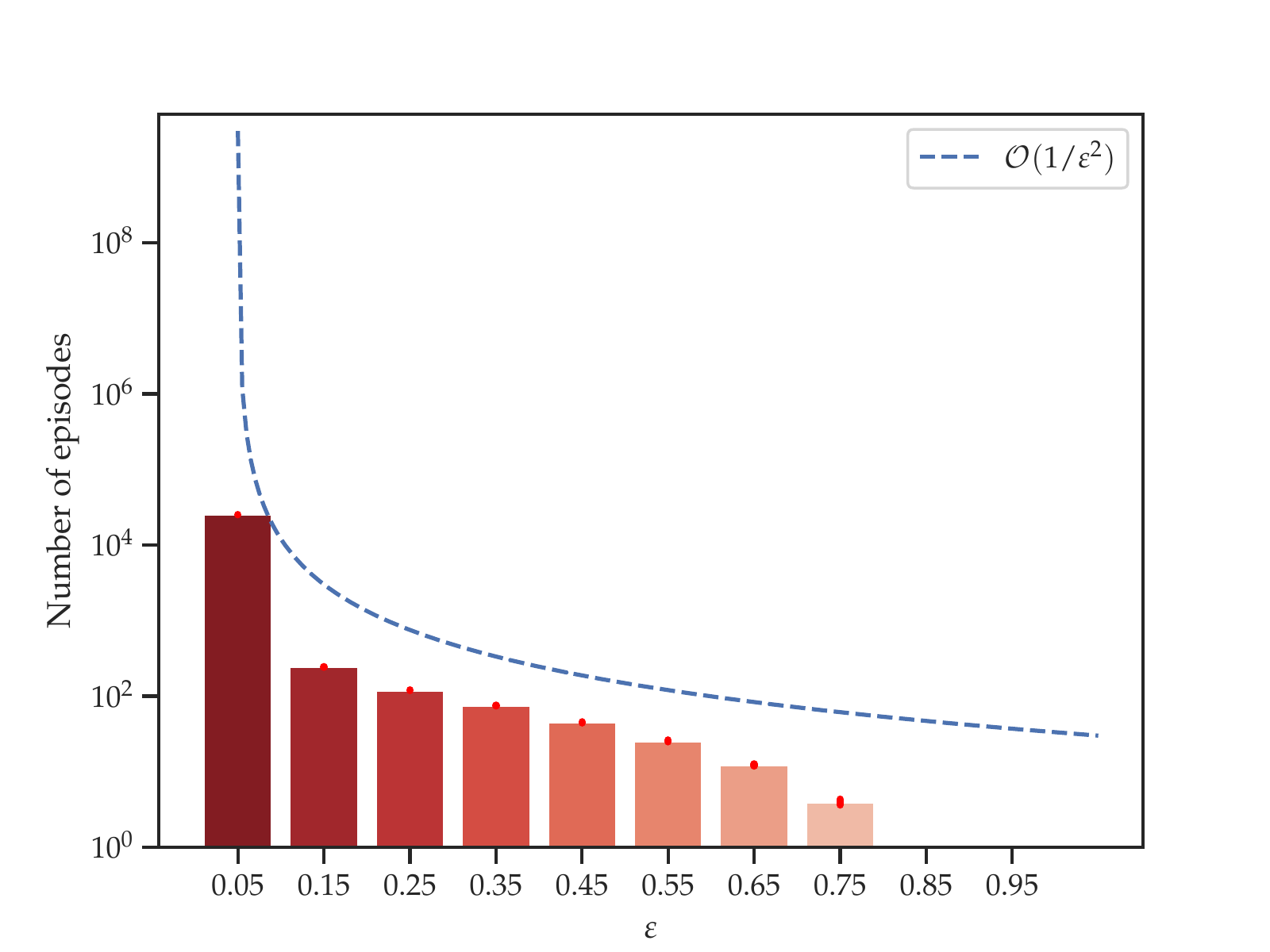}
			\label{fig:error-bins-BPI}
		}
	}

\end{figure}

We perform the following experiment: each algorithm interacts with the environment until it gathers a total of $n$ transitions (exploration phase). RP, GM and \OurAlgorithm{} use the gathered data to estimate a model $\hat{P}$ and, at the end, they are given a reward function $r$ and compute $\widehat{V}^\star_1(s_1; r)$ (estimation phase). Only the \OurAlgorithmBPI{} agent is allowed to observe the rewards during exploration phase, after which it outputs a policy $\widehat{\pi}^\star$. For \OurAlgorithm{} and \OurAlgorithmBPI{} we use the threshold $\beta(n,\delta)$ specified in Theorems~\ref{thm:sc} and \ref{thm:sc_BPI} with $\delta = 0.1$. For \OurAlgorithm{} we found out that removing the minimum with $\gamma \sigma_{H-h}$ in the definition of the error bound \eqref{def:bound} (which still gives a valid high-probability upper bound) leads to better practical performance, and we report results for this variant. Our study does not include the parallel regret minimization approach of \cite{Jin20RewardFree} described in the Introduction
as this algorithm is mostly theoretical: the parameters $N_0$ and $N$ that ensure the $(\varepsilon,\delta)$-PAC property are only given up to non-specified multiplicative constants.

We consider a Double Chain MDP, with states $\cS = \cpa{0,\ldots, L-1}$, where $L$ is the length of the chain, and actions $\cA = \cpa{0, 1}$, which correspond to a transition to the left (action $0$) or to the right (action $1$). When taking an action, there is a $0.1$ probability of moving to the other direction. A single reward of $1$ is placed at the rightmost state $s = L-1$, and the agent starts at $s_1 = (L-1)/2$, which leaves two possible directions for exploration. \Cref{fig:chain-error} shows the estimation error $|\widehat{V}^\star_1(s_1; r)-V^\star_1(s_1; r)|$ as a function of $n$, estimated over $N = 48$ runs. As expected, this error decays the fastest under \OurAlgorithmBPI{}, since its exploration is guided by the observed rewards. 
Interestingly, the performance of \OurAlgorithm{} is close to the agent which has access to a generative model.
\Cref{fig:chain-occupancies} shows the number of visits to each state during the exploration phase. We observe that the random policy is not able to reach the borders of the chain and, by design, the GM agent uniformly distributes its number of visits. \OurAlgorithm{} actively seeks to sample less visited states, and manage to fully explore the chain, whereas BPI-UCRL focuses its exploration on the part of the chain where the highest reward is placed.
In Appendix \ref{app:numerical_illustration} we report additional results of experiments in a GridWorld, where we observe the same behavior. In Figure~\ref{fig:error-bins-RFE} and \ref{fig:error-bins-BPI} we estimate the sample complexity $\tau$ of \OurAlgorithm{} and \OurAlgorithmBPI{} for different values of $\varepsilon$ using $N=48$ runs of $n=10^8$ (resp. $n=10^6$) sampled transitions, and checking the time needed for stopping at a level $\varepsilon$ (if stopping occurs before $n$). \OurAlgorithmBPI{} has indeed a much smaller sample complexity. 

\section{Conclusion}

Inspired by the work of Fiechter from 1994, we proposed Reward-Free UCRL, a natural adaptive approach to Reward-Free Exploration. The improved sample complexity of this $(\varepsilon,\delta)$-PAC algorithm is matching the existing lower bounds up to a factor $H$ in both regimes of small $\varepsilon$ and small $\delta$. We also proposed BPI-UCRL for the related Best Policy Identification problem that was the initial focus of the work of Fiechter. Understanding the difference in complexity between RFE and BPI is an interesting open question, not fully solved in this paper. In particular, we will investigate in future work whether it is possible to design algorithms that are simultaneously optimal in the small $\delta$ and small $\varepsilon$ regime for RFE or BPI. For BPI, we believe that one should look beyond the existing worse-case lower bound for problem dependent guarantees.

\section*{Acknowledgements} We acknowledge the support of the European CHISTERA project DELTA. Anders Jonsson is partially supported by the Spanish grants TIN2015-67959 and PCIN-2017-082.

\bibliography{biblio_alt}

\appendix

\newpage

\section{Additional Experiments}
\label{app:numerical_illustration}

Here, we consider a GridWorld environment, whose state space is a set of discrete points in a $21\times21$ grid. In each state, an agent can choose four actions: left, right, up or down, and it has a $5\%$ probability of moving to the wrong direction. The reward is equal to 1 in state $(16, 16)$ and is 0 elsewhere. \Cref{fig:grid-error} shows $\abs{\widehat{V}_1^\star(s_1; r)-V^\star_1(s_1; r)}$ as a function of $n$ and \Cref{fig:grid-occupancies} shows the number of visits to each state during the exploration phase. We observe the same behavior as explained for the DoubleChain: RF-UCRL seeks to sample from less visited states, whereas BPI-UCRL focuses its exploration near the rewarding state $(16, 16)$.

\begin{figure}[H]
	\floatconts{fig:suppl}
	{\caption{Comparison of several algorithms on the Gridworld MDP.}}
	{
		\subfigure[Approximation error as a function of $n$]{
			\includegraphics[trim={0 0 0 0}, clip, width=0.5\linewidth]{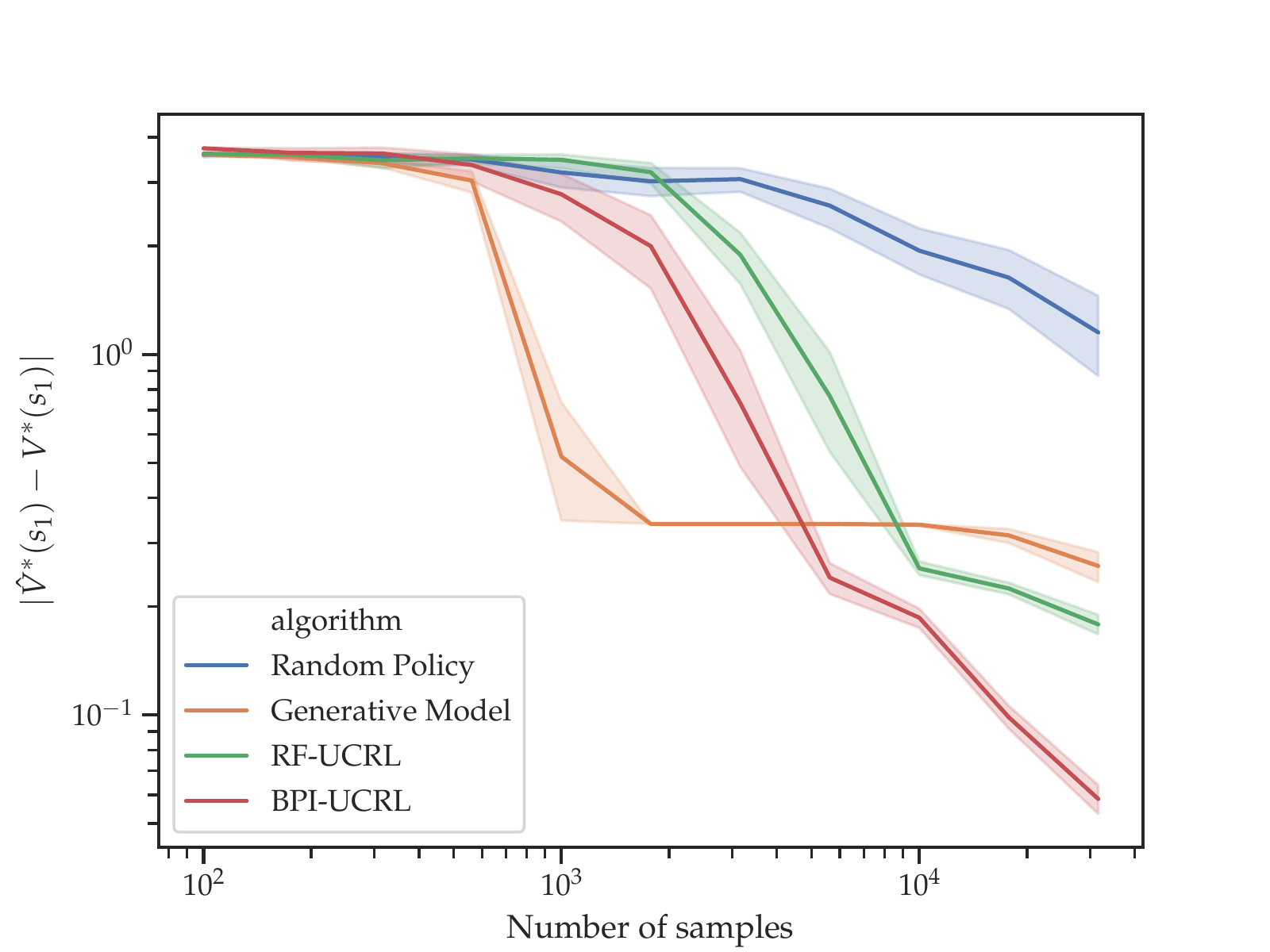}
			\label{fig:grid-error}
		}
		\subfigure[Number of state visits for $n = 30000$]{
			\includegraphics[trim={1cm 1cm 1cm 1cm}, clip, width=0.8\linewidth]{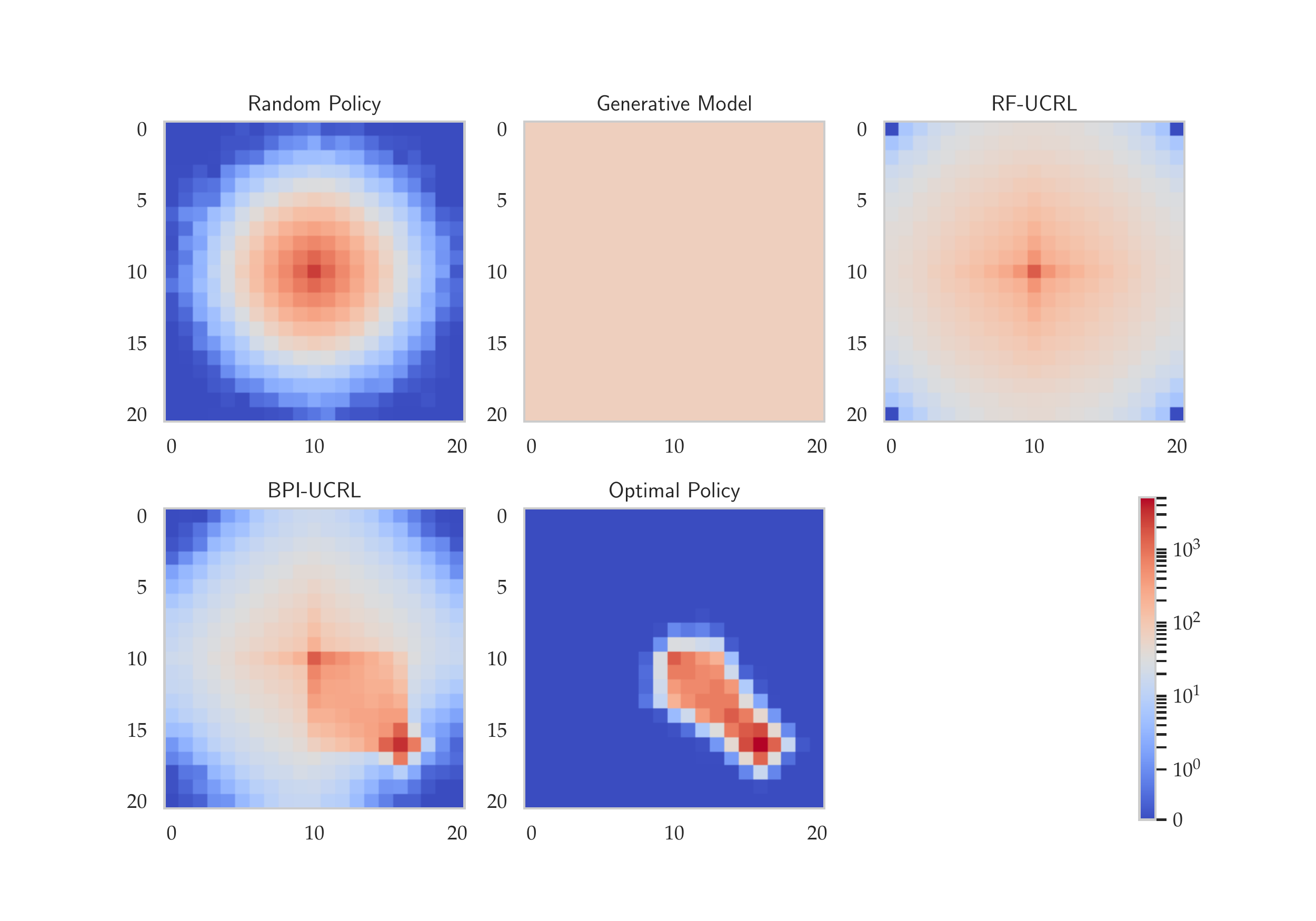}
			\label{fig:grid-occupancies}
		}
	}
\end{figure}

\section{High Probability Events}
\label{app:concentration_events}

We recall that the event $\cF$ introduced in the proof of Theorem~\ref{thm:sc} is the intersection of the two events 
\begin{eqnarray*}
  \cE & = & \left\{\forall t \in \N, \forall h \in [H], \forall (s,a), \KL\big(\hat{p}^t_h(\cdot | (s,a)), p_h(\cdot | (s,a))\big)\leq \frac{\beta(n_h^t(s,a),\delta)}{n_h^t(s,a)}\right\}\;, \\
  \cE^{\text{cnt}} & = & \left\{ \forall t \in \N^\star, \forall h\in [H],\forall (s ,a)\in\cS\times\cA:\ n_h^t(s,a) \geq \frac{1}{2}\bar n_h^t(s,a)-\log\left(\frac{2SAH}{\delta}\right)  \right\}\;.
\end{eqnarray*}

We first recall some useful concentration inequalities. The first one is a time-uniform deviation inequality for categorial random variable, proved by \cite{MDPGAPE}. 

\begin{lemma}[Proposition 1 in \cite{MDPGAPE}]\label{lem:max_ineq_categorical}Let $X_1, X_2,\ldots,X_n,\ldots$ be i.i.d. samples from a distribution supported over $\{1,\ldots,m\}$, of probabilities given by $p\in\Sigma_m$, where $\Sigma_m$ is the probability simplex of dimension $m-1$. We denote by $\hp_n$ the empirical vector of probabilities, i.e. for all $k\in\{1,\ldots,m\}$
\[
\hp_{n,k} = \frac{1}{n} \sum_{\ell=1}^n \ind(X_\ell = k)\,.
\]
For all $p\in\Sigma_m$, for all $\delta\in[0,1]$,
\[
    \PP\Big(\exists n\in \N^*,\, n\KL(\hp_n, p)> \log(1/\delta) + (m-1)\log\big(e(1+n/(m-1))\big)\Big)\leq \delta\,.
\]

\end{lemma}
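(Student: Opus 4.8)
The plan is to use the \emph{method of mixtures} (Laplace's method): rather than union-bounding over $n$ (which would leak extra $\log n$ factors), I would fold the whole family of likelihood ratios indexed by the candidate parameter $q$ into a single nonnegative martingale and then apply Ville's maximal inequality. Write $N_{n,k} = \sum_{\ell=1}^n \ind(X_\ell = k) = n\hp_{n,k}$ for the counts, and for any $q \in \Sigma_m$ introduce the likelihood ratio process $L_n(q) = \prod_{\ell=1}^n q_{X_\ell}/p_{X_\ell} = \prod_k (q_k/p_k)^{N_{n,k}}$. For each fixed $q$, a one-line computation gives $\bE_p[q_{X_\ell}/p_{X_\ell}] = \sum_k p_k (q_k/p_k) = 1$, so $(L_n(q))_{n}$ is a nonnegative martingale with $L_0(q)=1$ under the true law $p$.

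First I would form the mixture $M_n = \int_{\Sigma_m} L_n(q)\,d\nu(q)$ with $\nu$ the uniform, i.e. Dirichlet$(1,\dots,1)$, prior on $\Sigma_m$; by Tonelli and the martingale property of each $L_n(q)$, $M_n$ is itself a nonnegative martingale with $M_0=1$. The second step is to evaluate $M_n$ in closed form: $\nu$ has constant density $(m-1)!$ and $\int_{\Sigma_m}\prod_k q_k^{N_{n,k}}\,dq = \prod_k N_{n,k}!/(n+m-1)!$ is a Dirichlet integral, so
\[ M_n = \frac{(m-1)!\,\prod_k N_{n,k}!}{(n+m-1)!\,\prod_k p_k^{N_{n,k}}} = \frac{\exp\!\big(n\KL(\hp_n,p)\big)}{\binom{n+m-1}{m-1}\,\binom{n}{N_{n,1},\dots,N_{n,m}}\prod_k \hp_{n,k}^{N_{n,k}}}, \]
where the second equality only reorganizes factorials into binomial and multinomial coefficients and uses $\prod_k p_k^{N_{n,k}} = \big(\prod_k \hp_{n,k}^{N_{n,k}}\big)\,e^{-n\KL(\hp_n,p)}$.

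The third and central step is the \emph{pseudo-maximization}, lower bounding $M_n$ by $\exp(n\KL(\hp_n,p))$ up to a polynomial factor. The key observation is that the multinomial factor $\binom{n}{N_{n,1},\dots,N_{n,m}}\prod_k\hp_{n,k}^{N_{n,k}}$ is exactly the probability that a $\mathrm{Multinomial}(n,\hp_n)$ vector equals $(N_{n,1},\dots,N_{n,m})$, hence is at most $1$; this yields $\exp(n\KL(\hp_n,p)) \le \binom{n+m-1}{m-1}\,M_n$ for every $n \in \N^*$. Combining it with the standard type-counting bound $\binom{n+m-1}{m-1}\le \big(e(1+n/(m-1))\big)^{m-1}$ gives $\exp(n\KL(\hp_n,p)) \le \big(e(1+n/(m-1))\big)^{m-1}M_n$. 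Finally, Ville's inequality gives $\PP(\exists n : M_n \ge 1/\delta)\le \delta$, and on the complementary event the last display forces $n\KL(\hp_n,p) \le \log(1/\delta) + (m-1)\log\big(e(1+n/(m-1))\big)$ simultaneously for all $n$, which is the claim.

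The main obstacle is precisely this pseudo-maximization step. A naive route through Stirling's formula applied to $\prod_k N_{n,k}!$ leaks a spurious $\tfrac12\log n$ term that cannot be absorbed into the target threshold (one sees this already fails for $m=2$), so I would deliberately avoid Stirling and instead exploit that the multinomial coefficient at the empirical mode is a probability bounded by $1$; this is what makes the prefactor come out as the exact number of types $\binom{n+m-1}{m-1}$. Minor care is also needed when some $p_k=0$ (those symbols are never observed, so they contribute nothing almost surely) and in the Tonelli/integrability check for the mixture, but these are routine.
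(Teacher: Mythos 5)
Your proof is correct, and it is essentially the argument behind the cited result: the paper itself gives no proof of this lemma, deferring to Proposition~1 of \citet{MDPGAPE}, whose proof is precisely a method-of-mixtures argument (a likelihood-ratio martingale mixed over a uniform Dirichlet prior on the simplex, with the $(m-1)\log\big(e(1+n/(m-1))\big)$ term arising from bounding the number-of-types factor $\binom{n+m-1}{m-1}$, followed by Ville's inequality). Your pseudo-maximization step — observing that the multinomial coefficient times $\prod_k \hp_{n,k}^{N_{n,k}}$ is a probability and hence at most $1$ — is the right way to avoid the spurious $\tfrac12\log n$ from Stirling, and the edge cases you flag ($p_k=0$, Tonelli) are indeed routine.
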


The second is time-uniform deviation inequality for a sequence of Bernoulli random variables, proved by \citet{dann2017unifying}.

\begin{lemma}[Lemma F.4 in \cite{dann2017unifying}]
\label{lem:max_ineq_bernoulli_martingale} Let $X_1, X_2,\ldots,X_n,\ldots$ be a sequence of Bernoulli random variables adapted to the filtration $(\cF_t)_{t\in\N}$.
If we denote $p_n= \PP(X_n =1|\cF_{n-1})$, then for all $\delta\in(0,1]$
\[
\PP\left(\exists n \in \N^*: \sum_{\ell=1}^n X_\ell < \sum_{\ell=1}^n p_\ell/2 - \log(1/\delta)\right) \leq \delta \,.
\]
\end{lemma}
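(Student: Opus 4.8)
The plan is to treat this as a uniform-in-time lower-tail bound for the sum $S_n = \sum_{\ell=1}^n X_\ell$ of conditionally Bernoulli variables relative to its predictable mean $W_n = \sum_{\ell=1}^n p_\ell$, and to obtain it from a single nonnegative martingale via Ville's maximal inequality rather than by a union bound over $n$. Since I only need a lower bound on $S_n$, I would work with the exponential martingale built from $-\lambda X_\ell$ for a fixed $\lambda > 0$, so that small values of $S_n$ correspond to large values of the martingale.

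Concretely, first I would define, for $\lambda > 0$,
\[
M_n(\lambda) = \prod_{\ell=1}^n \frac{e^{-\lambda X_\ell}}{\bE\!\left[e^{-\lambda X_\ell}\mid \cF_{\ell-1}\right]},
\]
with $M_0(\lambda) = 1$. Each factor has conditional expectation $1$ by construction, so $(M_n(\lambda))_n$ is a nonnegative martingale with $\bE[M_n(\lambda)] = 1$. Computing the conditional moment generating function of a Bernoulli variable, $\bE[e^{-\lambda X_\ell}\mid\cF_{\ell-1}] = 1 - p_\ell(1 - e^{-\lambda})$, and using the elementary inequality $1 - x \le e^{-x}$, I would lower bound $M_n(\lambda) \ge \exp\!\big(-\lambda S_n + (1 - e^{-\lambda}) W_n\big)$.

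Next I would apply Ville's inequality: as $M_n(\lambda)$ is a nonnegative martingale with unit mean, $\PP\big(\exists n \in \N^*: M_n(\lambda) \ge 1/\delta\big) \le \delta$. On the complementary event, the lower bound on $M_n(\lambda)$ gives, for every $n$, the inequality $-\lambda S_n + (1 - e^{-\lambda}) W_n < \log(1/\delta)$, hence $S_n > \tfrac{1 - e^{-\lambda}}{\lambda}\, W_n - \tfrac{1}{\lambda}\log(1/\delta)$. It then remains to pick $\lambda$ so that the coefficient of $W_n$ is at least $1/2$ while the coefficient of $\log(1/\delta)$ is at most $1$. The choice $\lambda = 1$ works cleanly, since $1 - e^{-1} \approx 0.63 \ge 1/2$ whereas $1/\lambda = 1$, yielding $S_n \ge \tfrac12 W_n - \log(1/\delta)$ for all $n$ simultaneously; taking complements gives the claim.

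The Bernoulli computation and the $1 - x \le e^{-x}$ estimate are routine. The step requiring the most care is the passage from a single fixed $\lambda$ to the exact constants in the statement, namely the factor $1/2$ on $W_n$ together with the unit coefficient on $\log(1/\delta)$: one must check that one $\lambda$—here $\lambda = 1$—simultaneously satisfies $\tfrac{1 - e^{-\lambda}}{\lambda} \ge \tfrac12$ and $\tfrac1\lambda \le 1$, which is precisely what lets Ville's inequality deliver a time-uniform guarantee with no discretization of $n$ and no residual union bound.
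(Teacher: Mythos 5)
Your argument is correct: the exponential martingale $M_n(\lambda)$ with $\lambda=1$, the bound $1-x\le e^{-x}$ on the conditional Bernoulli moment generating function, and Ville's maximal inequality together give exactly the stated time-uniform bound, since $1-e^{-1}\ge 1/2$. The paper itself does not prove this lemma but imports it from \cite{dann2017unifying}, and your derivation is essentially the same martingale argument used there, so there is nothing to flag.
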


We can now prove the following.

\begin{lemma} \label{lem:concentration} For $\beta(n,\delta) = \log\!\big(2SAH/\delta\big) + (S-1)\log\big(e(1+n/(S-1))\big)$, it holds that $\bP\left(\cE\right) \geq 1- \tfrac{\delta}{2}$. Moreover, $\bP\left(\cE^{\text{cnt}}\right) \geq 1- \tfrac{\delta}{2}$.
\end{lemma}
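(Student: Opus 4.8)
The plan is to establish the two high-probability bounds separately, in each case by a union bound over the $SAH$ triples $(h,s,a)$, applying one of the two cited time-uniform deviation inequalities to each triple with confidence parameter $\delta' = \delta/(2SAH)$. With this choice $\log(1/\delta') = \log(2SAH/\delta)$, and the $SAH$ individual failure probabilities sum to $\delta/2$, which is exactly the slack in each claim.

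For the event $\cE$, I would first reduce the data-dependent counts to an i.i.d.\ sampling problem. Fix $(h,s,a)$ and let $Y_1^{h,s,a}, Y_2^{h,s,a}, \ldots$ denote the successive next-states observed whenever the pair $(s,a)$ is visited at step $h$; by the Markov property each such transition is drawn from $p_h(\cdot\,|\,s,a)$ independently of the past, so this is an i.i.d.\ sequence of categorical variables over $\cS$ (i.e.\ $m = S$, hence $m-1 = S-1$). The empirical kernel $\hat p_h^t(\cdot\,|\,s,a)$ is exactly the empirical distribution of the first $n_h^t(s,a)$ of these samples. Applying Lemma~\ref{lem:max_ineq_categorical} with $\delta' = \delta/(2SAH)$ gives, with probability at least $1 - \delta/(2SAH)$, that for \emph{every} $n \in \N^*$ one has $n\,\KL(\hat p_n, p_h(\cdot\,|\,s,a)) \leq \log(2SAH/\delta) + (S-1)\log(e(1+n/(S-1))) = \beta(n,\delta)$. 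Since this holds simultaneously for all sample sizes, it holds in particular for the random value $n = n_h^t(s,a)$ at every episode $t$; when $n_h^t(s,a) = 0$ the claimed inequality is vacuous under the convention $\beta(0,\delta)/0 = +\infty$. A union bound over the $SAH$ triples then yields $\bP(\cE) \geq 1 - \delta/2$.

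For the event $\cE^{\text{cnt}}$, I would cast the counts as a Bernoulli martingale. Fix $(h,s,a)$ and set $X_i = \ind((s_h^i,a_h^i) = (s,a))$, adapted to the natural filtration $(\cF_i)$ of the episodes. Since the policy $\pi^i$ is determined by the data from the previous episodes, the conditional visitation probability is $\PP(X_i = 1 \mid \cF_{i-1}) = p_h^{\pi^i}(s,a) = p_h^i(s,a)$, so that $\sum_{\ell=1}^t X_\ell = n_h^t(s,a)$ and $\sum_{\ell=1}^t p_h^\ell(s,a) = \bar n_h^t(s,a)$. Lemma~\ref{lem:max_ineq_bernoulli_martingale} with $\delta' = \delta/(2SAH)$ then guarantees, with probability at least $1 - \delta/(2SAH)$, that $n_h^t(s,a) \geq \tfrac12 \bar n_h^t(s,a) - \log(2SAH/\delta)$ for all $t$, and a union bound over the $SAH$ triples gives $\bP(\cE^{\text{cnt}}) \geq 1 - \delta/2$.

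The only genuinely delicate point is the passage from a bound indexed by a deterministic sample size to a statement about the random counts $n_h^t(s,a)$: this is precisely what the \emph{time-uniform} nature of both cited inequalities buys us, since a bound valid for all $n$ simultaneously is automatically valid at the random index realized at each episode, with no need to treat $n_h^t(s,a)$ as a stopping time. The remaining bookkeeping — matching $m-1$ to $S-1$, $\log(1/\delta')$ to $\log(2SAH/\delta)$, and verifying the $n=0$ corner case — is routine.
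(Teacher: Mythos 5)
Your proof is correct and follows essentially the same route as the paper's: a union bound over the $SAH$ triples combined with Lemma~\ref{lem:max_ineq_categorical} (for $\cE$) and Lemma~\ref{lem:max_ineq_bernoulli_martingale} (for $\cE^{\text{cnt}}$), each applied at level $\delta/(2SAH)$. You are merely more explicit than the paper about the reduction to an i.i.d.\ next-state sequence, the martingale structure of the visit indicators, and the $n_h^t(s,a)=0$ corner case, all of which the paper leaves implicit.
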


\begin{proof} Using Lemma~\ref{lem:max_ineq_categorical} and a union bound yields
\begin{eqnarray*}
\bP\left(\cE^c\right) & \leq & \sum_{h = 1}^{H} \sum_{(s,a) \in \cS \times \cA} \bP\left(\exists t \in \N : {n_h^t(s,a)}\KL\big(\hat{p}^t_h(\cdot | (s,a)), p_h(\cdot | (s,a))\big)\geq \beta(n_h^t(s,a),\delta)\right) \\
& \leq & \sum_{h = 1}^{H} \sum_{(s,a) \in \cS \times \cA}  \frac{\delta}{2SAH} = \frac{\delta}{2}.
\end{eqnarray*} 

Then, using Lemma~\ref{lem:max_ineq_bernoulli_martingale} and a union bound yields
\begin{eqnarray*}
\bP\left(\left(\cE^{\text{cnt}}\right)^c\right) & \leq & \sum_{h = 1}^{H} \sum_{(s,a) \in \cS \times \cA} \bP\left(\exists t \in \N : n_h^t(s,a) \leq \frac{1}{2} \bar n_h^t(s,a) - \log\left(\frac{2SAH}{\delta}\right)\right) \\ 
& \leq & \sum_{h = 1}^{H} \sum_{(s,a) \in \cS \times \cA} \bP\left(\exists t \in \N : \sum_{i=1}^{t} \ind\left((s_h^i,a_h^i) = (s,a)\right) \leq \frac{1}{2} \sum_{i=1}^{t} p^{i}_h(s,a) - \log\left(\frac{2SAH}{\delta}\right)\right) \\ 
& \leq & \sum_{h = 1}^{H} \sum_{(s,a) \in \cS \times \cA}  \frac{\delta}{2SAH} = \frac{\delta}{2}.
\end{eqnarray*} 
\end{proof}

\section{Proof of Auxiliary Lemmas for Theorem~\ref{thm:sc}}\label{app:main_thm}

\subsection{From Values to Optimal Values}\label{proof:inclusion}

In this section, we prove the inclusion \eqref{estimation_to_value}, namely that for all $t$
\[\left\{ \forall \pi, \left|\hat V_1^{t,\pi}(s_1; r) - V_1^{\pi}(s_1; r) \right| \leq \varepsilon/2\right\} \subseteq \left\{ V_1^{\star}(s_1;r) - V_1^{\hat{\pi}^\star_{t,r}}(s_1;r) \leq \varepsilon\right\}.\]
We denote by $\pi^\star$ the optimal policy in the MDP $(P,r)$ and recall that $\hat{\pi}^\star_{t,r}$ is the optimal policy in the MDP $(\hat{P}_t,r)$. One can write 
\begin{eqnarray*}
 V_1^{\star}(s_1;r) - V_1^{\hat{\pi}^\star_{t,r}}(s_1;r) & = & V_1^{\pi^\star}(s_1;r) - \hat V_1^{t,\pi^\star}(s_1;r) + \underbrace{\hat V_1^{t,\pi^\star}(s_1;r) - \hat V_1^{t,\hat{\pi}^\star_{t,r}}(s_1;r)}_{\leq 0} \\ & & + \hat V_1^{t,\hat{\pi}^\star_{t,r}}(s_1;r) - V_1^{\hat{\pi}^\star_{t,r}}(s_1;r).
\end{eqnarray*}
The middle term is non-negative as $\hat{\pi}^\star_{t,r}$ is the optimal policy in the empirical MDP which yields 
\begin{eqnarray*}
 V_1^{\star}(s_1;r) - V_1^{\hat{\pi}^\star_{t,r}}(s_1;r) & \leq & \left|V_1^{\pi^\star}(s_1;r) - \hat V_1^{t,\pi^\star}(s_1;r)\right| + \left|\hat V_1^{t,\hat{\pi}^\star_{t,r}}(s_1;r) - V_1^{\hat{\pi}^\star_{t,r}}(s_1;r)\right|
\end{eqnarray*}
and easily yields the inclusion above. 

\subsection{Proof of Lemma~\ref{lem:induction}}

By definition of $\bar{E}_h^{t}(s,a)$ and the greedy policy $\pi^{t+1}$, if $n_h^t(s,a) > 0$,
\begin{eqnarray}\overline{E}_h^{t}(s,a) &\leq& \gamma \sigma_{H-h}\sqrt{\frac{2\beta(n_h^t(s,a),\delta)}{n_h^t(s,a)}} + \gamma \sum_{s' \in \cS} \hat p_h^{t}(s' | s,a) \bar E^t_{h+1}(s', {\pi}^{t+1}(s'))\label{ToPlugIn}\,.\end{eqnarray}

From the definition of the event $\cE$ and Pinsker's inequality, one can further upper bound 

{\footnotesize
\begin{align*}\sum_{s' \in \cS}\! \hat p_h^{t}(s' | s,a) \bar E^t_{h+1}(s', {\pi}^{t+1}(s'))& \leq \sum_{s' \in \cS} \! p_h^{t}(s' | s,a) \bar E^t_{h+1}(s', {\pi}^{t+1}(s')) + \sum_{s' \in \cS} \! \left(\hat p_h^{t}(s' | s,a) - p_h^{t}(s' | s,a)\right)\bar E^t_{h+1}(s', {\pi}^{t+1}(s')) \\
& \leq   \sum_{s' \in \cS}\!  p_h^{t}(s' | s,a) \bar E^t_{h+1}(s', {\pi}^{t+1}(s')) + \|\hat p_h^{t}(\cdot | s,a) -p_h^{t}(\cdot | s,a)\|_1 \sigma_{H-h}  \\
& \leq \sigma_{H-h} \sqrt{2\frac{\beta(n_h^t(s,a),\delta)}{n_h^t(s,a)}} + \sum_{s' \in \cS} \! p_h^{t}(s' | s,a) \bar E^t_{h+1}(s', {\pi}^{t+1}(s'))\,,
\end{align*}}
where we used that $\bar E^t_{h+1}(s', {\pi}^{t+1}(s')) \leq \gamma \sigma_{H-h-1}\leq \sigma_{H-h}$. Plugging this in \eqref{ToPlugIn}, upper bounding $\gamma$ by 1 and using $2\sqrt{2}\leq 3$ yields
\[\overline{E}_h^{t}(s,a) \leq 3 \sigma_{H-h}\sqrt{\frac{\beta(n_h^t(s,a),\delta)}{n_h^t(s,a)}} + \gamma \sum_{s' \in \cS} p_h^{t}(s' | s,a) \bar E^t_{h+1}(s', {\pi}^{t+1}(s'))\]

The conclusion follows by noting that it also holds that \[\overline{E}_h^{t}(s,a) \leq \gamma\sigma_{H-h} \leq 3 \sigma_{H-h} \leq 3\sigma_{H-h} + \gamma\sum_{s' \in \cS}  p_h^{t}(s' | s,a) \bar E^t_{h+1}(s', {\pi}^{t+1}(s'))\]
and that the inequality is also true for $n_h^t(s,a)=0$ with the convention $1/0 = +\infty$.

\subsection{Proof of Lemma~\ref{lem:cnt_pseudo}}

As the event $\cE^{\mathrm{cnt}}$ holds, we know that for all $t < \tau$,
\begin{eqnarray*}n_{\ell}^{t}(s,a) &\geq& \frac{1}{2}\bar n_{\ell}^{t}(s,a) - \beta^{\cnt}(\delta).
\end{eqnarray*}

We now distinguish two cases. First, if $\beta^{\cnt}(\delta) \leq \tfrac{1}{4}\bar n_{\ell}^{t}(s,a)$, then \[\frac {\beta(n_{\ell}^t(s,a), \delta)} {n_{\ell}^t(s,a)}\wedge 1 \leq \frac {\beta(n_{\ell}^t(s,a), \delta)} {n_{\ell}^t(s,a)}\leq \frac {\beta\left(\tfrac{1}{4}\bar n_{\ell}^{t}(s,a), \delta\right)} {\tfrac{1}{4}\bar n_{\ell}^{t}(s,a)} \leq 4   \frac {\beta\left(\bar n_{\ell}^{t}(s,a), \delta\right)} {\bar n_{\ell}^{t}(s,a) \vee 1},\]
where we use that $x \mapsto \beta(x,\delta)/x$ is non-increasing for $x\geq 1$,  $x \mapsto \beta(x,\delta)$ is non-decreasing, and $\beta^{\cnt}(\delta) \geq 1$.

If $\beta^{\cnt}(\delta) > \tfrac{1}{4}\bar n_{\ell}^{t}(s,a)$, simple algebra shows that
\[\frac {\beta(n_{\ell}^t(s,a), \delta)} {n_{\ell}^t(s,a)}\wedge 1\leq   1 < 4 \frac{\beta^{\cnt}(\delta)}{\bar n_{\ell}^{t}(s,a) \vee 1} \leq 4 \frac{\beta(\bar n_{\ell}^{t}(s,a), \delta)}{\bar n_{\ell}^{t}(s,a) \vee 1},\]
where we use that $1 \leq \beta^{\cnt}(\delta)\leq \beta(0,\delta)$ and $x \mapsto \beta(x,\delta)$ is non-decreasing.


 In both cases, we have
\[\left[{ \frac {\beta(n_{\ell}^t(s,a), \delta)} {n_{\ell}^t(s,a)} } \wedge 1\right] \leq 4 {\frac{\beta(\bar n_{\ell}^{t}(s,a), \delta)}{\bar n_{\ell}^{t}(s,a) \vee 1}}.\]

\newpage

\section{An Algorithm for Best Policy Identification} \label{app:BPI}

In this section, we describe the BPI-UCRL algorithm and analyze its sample complexity. BPI-UCRL aims at finding the optimal policy for a fixed reward function $r$, assumed to deterministic. Hence to ease the notation we drop the dependency in $r$ in the value and Q-value functions.

Unlike RF-UCRL, who builds upper bound on the estimation errors, BPI-UCRL relies on confidence intervals on the Q-value function of a policy $\pi$. To define these confidence regions, we first introduce a confidence region on the transition probabilities
\[\cC_h^t(s,a) = \left\{p\in \Sigma_{S} :  \KL\!\big(\hp_h^t(\cdot|s,a),p\big) \leq \frac {\beta(n_h^t(s,a), \delta)} {n_h^t(s,a)}\right\}\]
and define, for each policy $\pi$ the confidence regions after $t$ episodes as
\begin{align*}
  \uQ_h^{t,\pi}(s,a) &= (r_h + \gamma\max_{\up_h\in\cC_h^t(s,a)}\up_h \uV_{h+1}^{t,\pi} ) (s,a) &
    \lQ_h^{t,\pi}(s,a) &= (r_h + \gamma \min_{\lp_h\in\cC_h^t(s,a)}\lp_h \lV_{h+1}^{t,\pi} ) (s,a) \\
  \uV_h^{t,\pi}(s) &= \pi \uQ_h^{t,\pi}(s) &  \lV_h^{t,\pi}(s) &= \pi \lQ_h^{t,\pi}(s)\\
  \uV_{H+1}^{t,\pi}(s) &= 0 &  \lV_{H+1}^{t,\pi}(s) &= 0\\
  \up_h^{t,\pi}(s,a) &\in \argmax{\up\in\cC_h^t(s,a)} \up_h \uV_{h+1}^{t,\pi} (s,a) &
  \lp_h^{t,\pi}(s,a) &\in \argmin{\lp\in\cC_h^t(s,a)} \lp_h \lV_{h+1}^{t,\pi} (s,a)\,,
\end{align*}
where we use the notation $p_hf(s,a) = \EE_{s'\sim p_h(.|s,a)}f(s')$ for the expectation operator and $\pi g(s) = g\big(s,\pi(s)\big)$ for the application of a policy.
We also define upper and lower confidence bounds on the optimal value and Q-value functions as
\begin{align*}
  \uQ_h^{t}(s,a) &= (r_h + \gamma \max_{\up_h\in\cC_h^t(s,a)}\up_h \uV_{h+1}^{t} ) (s,a) &
    \lQ_h^{t}(s,a) &= (r_h + \gamma \min_{\lp_h\in\cC_h^t(s,a)}\lp_h \lV_{h+1}^{t} ) (s,a) \\
  \uV_h^{t}(s) &= \max_{a} \uQ_h^{t}(s,a) &  \lV_h^{t}(s) &= \max_{a} \lQ_h^{t}(s,a)\\
  \uV_{H+1}^{t}(s) &= 0 &  \lV_{H+1}^{t}(s) &= 0\\
  \up_h^{t}(s,a) &\in \argmax{\up\in\cC_h^t(s,a)} \up_h \uV_{h+1}^{t} (s,a) &
  \lp_h^{t}(s,a) &\in \argmin{\lp\in\cC_h^t(s,a)} \lp_h \lV_{h+1}^{t} (s,a)\\
  \upi_h^{t}(s,a) &\in \argmax{a} \uQ_{h}^{t} (s,a) &
  \lpi_h^{t}(s,a) &\in \argmax{a} \lQ_{h}^{t} (s,a)\,.
\end{align*}
By definition of the event $\cE$ in Lemma~\ref{lem:optimism}, note that for all $h$ and $(s,a)$ the true transition probability $p_h(\cdot | s,a)$ belongs to $\cC_h^t(s,a)$ for all $t$, hence one can easily prove by induction that for all $\pi$, $\lQ_h^{t,\pi}(s,a) \leq Q_h^{\pi}(s,a) \leq \uQ_h^{t,\pi}(s,a)$ and $\lQ_h^{t}(s,a) \leq Q_h^{\star}(s,a) \leq \uQ_h^{t}(s,a)$.

\paragraph{BPI-UCRL} We are now ready to present an algorithm for BPI based on the UCRL algorithm, named \OurAlgorithmBPI. It is defined by the three rules:
\begin{itemize}
  \item \textbf{Sampling rule} The policy $\pi^{t+1} = \upi^{t}$ acts greedily with respect to the upper-bounds on the optimal Q-value functions.
  \item \textbf{Stopping rule} $\tau = \inf\big\{ t\in\N : \uV_1^t(s_1)-\lV_1^t(s_1) \leq \epsilon \big\}$\,.
  \item \textbf{Recommendation rule} The prediction $\hpi_\tau = \lpi^\tau$ is the policy that acts greedily with respect to the lower-bounds on the optimal Q-value functions.
\end{itemize}

\OurAlgorithmBPI{} bears some similarities with the online algorithms proposed by \cite{EvenDaral06} to identify the optimal policy in a discounted MDP. They also build (different) upper and lower confidence bounds on $Q^\star(s,a)$ for all $(s,a)$ and output the greedy policy with respect to $\underline{Q}$. However the stopping rule waits until \emph{for all} state $s$ and all action $a$ that has not been eliminated, $|\overline Q (s,a) - \underline{Q}(s,a)| < \varepsilon(1-\gamma) / 2$, which may take a very long time if some states have a small probability to be reached. In our case, only the confidence interval on the optimal value in state $s_1$ needs to be small to trigger stopping. This different stopping rule is also due to a different objective: find an optimal policy in state $s_1$ (or when $s_1$ is drawn from some distribution $P_0$) as opposed to find an optimal policy in all states. In our case, we are able to provide upper bound on the stopping rule, which are not given by \cite{EvenDaral06}, who analyze the sample complexity of a different algorithm that requires a generative model. 

We now given an analysis of \OurAlgorithmBPI{}, which bears strong similarity with the proof of Theorem~\ref{thm:sc} and establishes similar sample complexity guarantees as those proved for \OurAlgorithm{}: a $\cO\left(({SAH^4}/{\epsilon^2})\ln(1/\delta)\right)$ bound in a regime of small $\delta$ and a $\cO\left({S^2AH^4}/{\epsilon^2}\right)$ bound in a regime of small $\varepsilon$. For stationary transitions, the dependency in $H$ in these bounds can be improved to $H^3$, following the same steps as in Appendix~\ref{app:stationary}.

\begin{theorem}\label{thm:sc_BPI} \OurAlgorithmBPI using threshold \[\beta(n,\delta) = \log\big(2SAH/\delta\big) + (S-1)\log\big(e(1+n/(S-1))\big)\]
is $(\varepsilon,\delta)$-correct for best policy identification. Moreover, with probability $1-\delta$, the number of trajectories $\tau$ collected satisfy
{\footnotesize
\begin{align*}
\tau \leq \!\frac{\cC_{H}SA}{\varepsilon^2}
	\!\spa{
		\log\!\pa{\! \frac{2SAH}{\delta}\! }
		\!+ 2(S\!-\!1)\!\log\!\pa{\!
			\frac{\cC_{H}SA}{\varepsilon^2}\!
			\pa{\!
				\log\!\pa{\! \frac{2SAH}{\delta} \!} \! + (S\!-\!1)\pa{\!\sqrt{e} + \sqrt{\frac{e}{S-1}}}
			\!}\!
		} \!+ (S\!-\!1)
	}
\end{align*}
}
where  $\cC_{H} = 64(1+\sqrt{2})^2\sigma_H^4$.
\end{theorem}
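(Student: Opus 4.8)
The plan is to mirror the proof of Theorem~\ref{thm:sc}, replacing the single error bound $\bar E_h^t$ by the \emph{gap} $G_h^t(s,a) := \uQ_h^t(s,a) - \lQ_h^t(s,a)$ between the upper and lower confidence bounds on $Q_h^\star$, and to show that $G$ obeys a recursion analogous to Lemma~\ref{lem:induction}. First I would dispatch \textbf{correctness}. On $\cE$ the true kernel lies in every $\cC_h^t(s,a)$, so (as already noted in the excerpt) $\lQ_h^t \le Q_h^\star \le \uQ_h^t$ and hence $\lV_1^\tau(s_1) \le V_1^\star(s_1) \le \uV_1^\tau(s_1)$. A short backward induction, using that the operator $f \mapsto \min_{q\in\cC_h^t(s,a)} q f$ is monotone and that $\hpi_\tau = \lpi^\tau$ is greedy w.r.t. $\lQ^\tau$, shows $\lV_1^\tau(s_1) \le \lV_1^{\tau,\hpi_\tau}(s_1) \le V_1^{\hpi_\tau}(s_1)$ (the last step from $\lQ^{t,\pi}\le Q^\pi$). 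Combined with the stopping condition this yields $V_1^\star(s_1) - V_1^{\hpi_\tau}(s_1) \le \uV_1^\tau(s_1) - \lV_1^\tau(s_1) \le \varepsilon$ on $\cE$, proving $(\varepsilon,\delta)$-correctness.

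For the \textbf{sample complexity}, the key step is the gap recursion. Writing $\up,\lp$ for the maximizing/minimizing kernels in $\cC_h^t(s,a)$ and inserting the true kernel $p_h$, for $n_h^t(s,a)>0$ I would decompose
\[ G_h^t(s,a) = \gamma\big[(\up-p_h)\uV_{h+1}^t + (p_h-\lp)\lV_{h+1}^t + p_h(\uV_{h+1}^t-\lV_{h+1}^t)\big](s,a). \]
Since $\up,\lp,p_h$ all lie in the KL-ball $\cC_h^t(s,a)$ around $\hp_h^t$, Pinsker's inequality bounds the first two terms by confidence widths of order $\sigma_{H-h}\sqrt{\beta(n_h^t(s,a),\delta)/n_h^t(s,a)}$. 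The third term propagates the gap along the \emph{true} transitions. Crucially, because the sampling rule $\pi^{t+1}=\upi^t$ is greedy w.r.t. the upper bounds, $\uV_{h+1}^t(s')=\uQ_{h+1}^t(s',\pi^{t+1}(s'))$ while $\lV_{h+1}^t(s')\ge\lQ_{h+1}^t(s',\pi^{t+1}(s'))$, so $\uV_{h+1}^t(s')-\lV_{h+1}^t(s')\le G_{h+1}^t(s',\pi^{t+1}(s'))$. Together with the trivial truncation $G_h^t\le\gamma\sigma_{H-h}$ (handling $n_h^t(s,a)=0$), this gives the exact analogue of Lemma~\ref{lem:induction}, with the gap now propagating at the sampled action.

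I would then set $q_h^t = \sum_{(s,a)} p_h^{t+1}(s,a) G_h^t(s,a)$. Using the recursion and $p_{h+1}^{t+1}(s',a')=p_{h+1}^{t+1}(s')\ind(a'=\pi^{t+1}(s'))$ yields $q_h^t \le W_h^t + \gamma q_{h+1}^t$ with $W_h^t$ the occupancy-weighted confidence width, exactly as in~\eqref{crucial}. For $t<\tau$, since $\pi^{t+1}$ is greedy w.r.t. $\uQ$ one has $q_1^t \ge \uV_1^t(s_1)-\lV_1^t(s_1) > \varepsilon$. Unrolling to $h=1$, summing over $t\in\{0,\dots,T\}$ for $T<\tau$, invoking Lemma~\ref{lem:cnt_pseudo} (counts-to-pseudo-counts, reused verbatim), then Lemma~19 of \cite{UCRL10} and the identity $\sum_{s,a}n_h^{T+1}(s,a)=T+1$, produces $\varepsilon\sqrt{T+1}\le c\sqrt{SA}\,\sigma_H^2\sqrt{\beta(T+1,\delta)}$ for an absolute constant $c$, whence $\tau \le (\cC_H SA/\varepsilon^2)\beta(\tau,\delta)$ with $\cC_H=64(1+\sqrt2)^2\sigma_H^4$; the explicit bound then follows from Lemma~\ref{lemma:inversion_of_n_inequality}.

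The main obstacle is the gap recursion, specifically forcing the step-$(h{+}1)$ value gap to propagate along the true occupancy measure of the \emph{sampling} policy rather than along some arbitrary action. This is precisely where the design of \OurAlgorithmBPI{} is used: being greedy with respect to the \emph{upper} bounds makes the $\max_a$ in $\uV_{h+1}^t$ attained at the sampled action, so $\uV_{h+1}^t-\lV_{h+1}^t$ collapses to $G_{h+1}^t(\cdot,\pi^{t+1}(\cdot))$, closing the recursion. The remaining care is bookkeeping the constants from the two-sided (optimistic and pessimistic) confidence widths, both controlled through the common KL-ball around $\hp_h^t$, which is what ultimately gives the slightly smaller $\cC_H$ than in Theorem~\ref{thm:sc}.
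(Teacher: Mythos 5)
Your proposal is correct and follows essentially the same route as the paper: correctness via the chain $V_1^{\hpi_\tau}(s_1)\geq \lV_1^{\tau}(s_1)\geq \uV_1^{\tau}(s_1)-\varepsilon\geq V_1^\star(s_1)-\varepsilon$ on $\cE$, and the sample complexity via a gap recursion that propagates along the true kernel at the action chosen by the upper-bound-greedy sampling rule, followed by the occupancy-weighting, count-to-pseudo-count, and Cauchy--Schwarz steps reused from Theorem~\ref{thm:sc}. The only (immaterial) difference is that you track the gap $\uQ_h^t-\lQ_h^t$ and collapse it at the sampled action via $\lV_{h+1}^t(s')\geq \lQ_{h+1}^t(s',\pi^{t+1}(s'))$, whereas the paper works with $\uQ_h^t-\lQ_h^{t,\upi^t}$ (the lower bound on the value of the sampling policy itself); both yield the same recursion and constants.
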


\begin{proof}
  The first part of the theorem is a direct consequence of the correctness of the confidence bounds. Indeed on the event $\cE$ if the algorithm stops at time $\tau$, then we know
 \begin{align*}
   V^{\lpi^\tau}_1(s_1) \geq \lV_1^{\tau,\lpi^\tau}(s_1) = \lV_1^\tau(s_1) \geq \uV_1^\tau(s_1) - \epsilon \geq \Vstar_1(s_1) - \epsilon\,.
 \end{align*}
 The fact that the event $\cE$ holds with probability at least $1-\delta$ (see Lemma~\ref{lem:concentration}) allows us to conclude that \OurAlgorithmBPI is $(\varepsilon,\delta)$-correct.

 The proof of upper bounds on the complexity is very close to a classical regret proof. Fix
 some $T<\tau$. Then we know that for all $t\leq T$ it holds
 \[
\epsilon \leq  \uV_1^t(s_1)-\lV_1^t(s_1) \leq \uV_1^t(s_1)-\lV_1^{t,\upi^t}(s_1)\,.
 \]
On the event $\cF$ for a state action $(s,a)$, using the holder inequality, the fact that $\lV_{h+1}^{t,\upi^t}(s')\leq \sigma_{H-h}$ and $\uV_{h+1}^{t}(s')\leq \sigma_{H-h}$ and the Pinsker's inequality we have
\begin{align*}
  \uQ_h^t(s,a)-\lQ^{t,\upi^t}_h(s,a) &= \gamma (\up_h^t-p_h) \uV_{h+1}^t(s,a) + \gamma(p_h-\lp_h^{t,\upi}) \lV_{h+1}^{t,\upi^t}(s,a) + \gamma p_h(\uV_{h+1}^t-\lV^{t,\upi^t}_{h+1})(s,a)\\
   &\leq \| \up_h^{t}(\cdot | s,a) -p_h(\cdot | s,a)\|_1 \sigma_{H-h}+  \| p_h^{t}(\cdot | s,a) -\lp_h^{t,\upi^t}(\cdot | s,a)\|_1 \sigma_{H-h}\\
    &\qquad\qquad+ \gamma p_h(\uV_{h+1}^t-\lV^{t,\upi^t}_{h+1})(s,a)\\
   &\leq  4\sigma_{H-h}\left[\sqrt{\frac{\beta(n_h^t(s,a),\delta)}{n_h^t(s,a)}} \wedge 1 \right] +\gamma p_h(\uV_{h+1}^t-\lV^{t,\upi^t}_{h+1})(s,a)\,.
\end{align*}
Thus, using that $\upi^t = \pi^{t+1}$ and by definition that $\uV_h^{t}(s) = \pi_h^{t+1}\uQ_h^{t}(s)$ and $\lV_h^{t,\upi^t}(s) = \pi_h^{t+1}\uQ_h^{t,\upi^t}(s)$, we obtain a recursive formula for the difference of upper and lower bound on the value functions, which may be viewed as a counterpart of Lemma~\ref{lem:induction} in the proof of Theorem~\ref{thm:sc}:
\[
\uV_h^t(s)-\lV_h^{t,\upi^t}(s) \leq 4\sigma_{H-h}\left[\sqrt{\frac{\beta(n_h^t(s,a),\delta)}{n_h^t(s,a)}} \wedge 1 \right] + \gamma p_h(\uV_{h+1}^t-\lV^{t,\upi^t}_{h+1})(s,a)
\]
Recalling that $p_h^{t}(s,a)$ denotes the probability that the state action pair $(s,a)$ is visited at step $h$ under the policy $\pi^t$ used in the $t$-th episode, we can prove by induction with the previous formula that for all $t < \tau$,
\[
\epsilon \leq \uV_1^t(s)-\lV_1^{t,\upi^t}(s) \leq 4\sum_{h=1}^H \gamma^{h-1}\sigma_{H-h}\sum_{s,a} p_h^{t+1}(s,a) \left[\sqrt{\frac{\beta(n_h^t(s,a),\delta)}{n_h^t(s,a)}} \wedge 1 \right]
\]
Thus summing for all $0\leq t \leq T < \tau$ leads to
\[
(T+1) \epsilon \leq \sum_{t=0}^T 4\sum_{h=1}^H \gamma^{h-1}\sigma_{H-h}\sum_{s,a} p_h^{t+1}(s,a) \left[\sqrt{\frac{\beta(n_h^t(s,a),\delta)}{n_h^t(s,a)}} \wedge 1 \right]\,.
\]
We can then conclude exactly as in the proof of Theorem~\ref{thm:sc}, that is use Lemma~\ref{lem:cnt_pseudo} to relate the counts $n_h^t(s,a)$ to the pseudo-counts $\bar n_h^t(s,a)$ to upper bound the sample complexity on the event $\cF = \cE \cap \cE^{\cnt}$, which holds with probability at least $1-\delta$.
\end{proof}

\section{Analysis in the Stationary Case} \label{app:stationary}

In the stationary case, the transition kernel doesn't depend on time, that is $p_h(s' |s,a) = p(s' |s,a)$ for all $h \in [H]$. In that case, the upper bounds used in the algorithms can take into account the number of visits to $(s,a)$ in any step, $n^t(s,a) = \sum_{h \in [H]} n_h^t(s,a)$. More precisely, in that case, the $\bar E_h^t(s,a)$ are replaced by  the tighter upper bounds
\[
 \tilde{E}_{h}^{t}(s,a)  =  \min \left\{ \gamma\sigma_{H-h} ;  \gamma \sigma_{H-h}\sqrt{\frac{2\beta(n^t(s,a),\delta)}{n^t(s,a)}} + \gamma \sum_{s'} \hat p^t(s' |s,a) \max_{b}\tilde{E}_{h+1}^{t}(s',b)\right\} \]
 where $\hat p^t(s' |s,a) = \frac{\sum_{i=1}^{t}\sum_{h=1}^H \ind\left(s_h^{i} = s, a_h^{i} =a, s_{h+1}^i = s'\right)}{n^t(s,a)}$.

 Letting $\tilde{\cE}$ be the event
 \[\tilde{\cE} = \left(\forall t \in \N^*, \forall (s,a)\in \cS \times \cA, \|\hat p^t(\cdot |s,a) - p(\cdot |s,a)\|_1 \leq \sqrt{\frac{2\beta(n^t(s,a),\delta)}{n^t(s,a)}}\right),\]
 with Pinsker's inequality and Lemma~\ref{lem:max_ineq_categorical}, one can prove that $\bP(\tilde{\cE})\geq 1- \delta/2$ for the choice $\beta(n,\delta) = \log\!\big(2SA/\delta\big) + (S-1)\log\big(e(1+n/(S-1))\big)$. 
 \OurAlgorithm{} based on the alternative bounds $\tilde{E}_h^t(s,a)$ is correct on $\tilde{\cE}$, and we can upper bound its sample complexity on the event $\tilde{\cE} \cap \cE^{\text{cnt}}$ following the same approach as before. Letting
 \[\tilde q_{h}^{t} = \sum_{(s,a)} p_{h}^{t+1}(s,a) \tilde{E}_h^{t}(s,a),\]
 one can establish a similar inductive relationship as that of Lemma~\ref{lem:induction} which yields
 \[\tilde q_{1}^{t} \leq 3 \sum_{h=1}^{H}\sum_{(s,a)} \gamma^{h-1} \sigma_{H-h}p_{h}^{t+1}(s,a)\left[\sqrt{\frac{\beta(n^t(s,a),\delta)}{n^t(s,a)}}\wedge 1\right].\]
 Hence, for every $T < \tau$, as $\tilde q_{1}^{t} \geq \varepsilon/2$ for all $t \leq T$, one can write
\begin{eqnarray*}
\varepsilon(T+1)
& \leq & 6 \sum_{t=0}^{T}\sum_{h=1}^{H}\sum_{(s,a)} \gamma^{h-1} \sigma_{H-h}p_{h}^{t+1}(s,a)\left[\sqrt{\frac{\beta(n^t(s,a),\delta)}{n^t(s,a)}}\wedge 1\right] \\
& \leq & 6 \sigma_H \sum_{(s,a)} \sum_{t=0}^{T}\sum_{h=1}^{H}p_{h}^{t+1}(s,a)\left[\sqrt{\frac{\beta(n^t(s,a),\delta)}{n^t(s,a)}}\wedge 1\right]
\end{eqnarray*}
Letting $\overline{n}^t(s,a) = \sum_{h=1}^{H}  \sum_{i=1}^{t}p_{h}^{i}(s,a)$, observe that $\sum_{h=1}^{H}p_{h}^{t+1}(s,a) = \overline{n}^{t+1}(s,a) - \overline{n}^t(s,a)$ and a similar reasoning than that in the proof of Lemma~\ref{lem:cnt_pseudo} yields
\[\varepsilon (T+1)  \leq  12 \sigma_H \sum_{(s,a)} \sum_{t=0}^{T}\left(\overline{n}^{t+1}(s,a) - \overline{n}^t(s,a)\right)\sqrt{\frac{\beta(\bar n^t(s,a),\delta)}{\bar n^t(s,a) \vee 1}} \]
Using Lemma 19 in \cite{UCRL10} and the Cauchy-Schwarz inequality yields
\begin{eqnarray*}
\varepsilon (T+1) & \leq &  12(1+\sqrt{2}) \sigma_H \sum_{(s,a)} \sqrt{\overline{n}^{T+1}(s,a)} \sqrt{\beta(\bar n^{T+1}(s,a),\delta)} \\
& \leq & 12(1+\sqrt{2}) \sigma_H \sqrt{\beta(H(T+1),\delta)}\sqrt{SA}\sqrt{ \sum_{s,a}\overline{n}^{T+1}(s,a)} \\
& = & 12(1+\sqrt{2}) \sigma_H \sqrt{\beta(H(T+1),\delta)}\sqrt{SA}\sqrt{H(T+1)}.
\end{eqnarray*}
Hence, $\tau$ is finite and satisfies
\[\tau \leq \frac{\cC SAH^3}{\varepsilon^2}\beta(H\tau,\delta)\]
for $\cC = 144(1+\sqrt{2})^2$. 
By Lemma~\ref{lemma:inversion_of_n_inequality}, it follows that {\footnotesize
\begin{align*}
\tau \leq \!\frac{\cC SAH^3}{\varepsilon^2}
	\!\spa{
		\log\!\pa{\! \frac{2SA}{\delta}\! }
		\!+ 2(S\!-\!1)\!\log\!\pa{\!
			\frac{\cC SAH^3\sqrt{H}}{\varepsilon^2}\!
			\pa{\!
				\log\!\pa{\! \frac{2SA}{\delta} \!} \! + (S\!-\!1)\pa{\!\sqrt{e} + \sqrt{\frac{He}{S-1}}}
			\!}\!
		} \!+ (S\!-\!1)
	}\;.
\end{align*}
}

\section{Conversion of UCB-VI to Best Policy Identification}\label{app:ChiJin}

We present here an alternative conversion from UCB-VI to a $(\varepsilon,\delta)$-PAC BPI algorithm, which improves over the one discussed by \cite{Jin18OptQL} and presented in Section~\ref{sec:RF_to_BPI}. 

Let $\cA(\varepsilon)$ be an algorithm that after a deterministic number $K_\varepsilon$ of trajectories starting from $s_0$ outputs a policy $\hat{\pi}$ satisfying 
\[\bP\left(V^{\star}(s_0) - V^{\hat\pi}(s_0) \geq \varepsilon\right) \leq \frac{1}{2}.\]
From \cite{Jin18OptQL}, UCB-VI run for $K_{\varepsilon}= \cO\left(\tfrac{H^2SA}{\varepsilon^2}\right)$ and outputing a policy uniformly at random among the one used satisfies this property. 

Now consider the following algorithm, that depends on three parameters, $\varepsilon_0$, $M$ and $N$: 
\begin{enumerate}
 \item Run $M$ independent instances of $\cA(\varepsilon_0)$ and denote by $\pi_1,\dots,\pi_M$ the $M$ policies returned by these algorithms. 
 \item For each $m\in [M]$, generate $N$ trajectories starting from $s_0$ under the policy $\pi_m$ and define $\hat{V}_m$ to be the average cumulative return of those trajectories. 
 \item Output the policy $\hat{\pi} = \pi_{\hat{m}}$ where $\hat{m} =\argmax{m=1,\dots,M} \hat{V}_m$. 
\end{enumerate}

\begin{proposition}\label{lem:Sufficient} Choosing $\varepsilon_0$, $M$ and $N = \frac{H^2}{2(\varepsilon')^2} \ln\left(\frac{M}{\delta'}\right)$ with 
\[
 \varepsilon_0 + 2 \varepsilon' \leq \varepsilon  \ \ \text{ and } \ \ \ \delta' + \left(\frac{1}{2}\right)^M \leq \delta, 
\]
the above strategy outputs an $\varepsilon$-optimal policy with probability larger than $1 - \delta$. 
\end{proposition}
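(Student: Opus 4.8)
The plan is to write the success event as the intersection of a \emph{phase-1 event}---that at least one candidate policy is $\varepsilon_0$-optimal---and a \emph{phase-2 event}---that the Monte-Carlo value estimates are all accurate---and then to run a short deterministic selection argument on their intersection. First I would treat phase 1: the $M$ instances of $\cA(\varepsilon_0)$ are run independently, and each returns a policy that is \emph{not} $\varepsilon_0$-optimal with probability at most $1/2$, so the probability that all of them fail is at most $(1/2)^M$. Writing $\cE_{\mathrm{good}}=\{\exists m\in[M]:\ V^\star(s_0)-V^{\pi_m}(s_0)<\varepsilon_0\}$, this gives $\bP(\cE_{\mathrm{good}})\geq 1-(1/2)^M$, and on $\cE_{\mathrm{good}}$ I fix an index $m^\star$ with $V^\star(s_0)-V^{\pi_{m^\star}}(s_0)<\varepsilon_0$.

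For phase 2, I would condition on the policies $\pi_1,\dots,\pi_M$, which are frozen once phase 1 ends. Each $\hat V_m$ is then the empirical mean of $N$ independent cumulative returns lying in $[0,\sigma_H]\subseteq[0,H]$, so Hoeffding's inequality gives $\bP(|\hat V_m-V^{\pi_m}(s_0)|\geq\varepsilon')\leq 2\exp(-2N(\varepsilon')^2/H^2)$ for each fixed $m$, which is of order $\delta'/M$ for the prescribed $N=H^2\ln(M/\delta')/(2(\varepsilon')^2)$. A union bound over $m\in[M]$ then shows that $\cE_{\mathrm{conc}}=\{\forall m\in[M]:\ |\hat V_m-V^{\pi_m}(s_0)|\leq\varepsilon'\}$ holds with probability at least $1-\delta'$ (the inconsequential two-sided constant being absorbed in the calibration of $N$, or avoided by keeping the upper tail uniformly over $m$ and the lower tail only at the fixed index $m^\star$).

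The core of the proof is then deterministic on $\cE_{\mathrm{good}}\cap\cE_{\mathrm{conc}}$. Using that $\hat m$ maximizes $\hat V_m$, so $\hat V_{\hat m}\geq\hat V_{m^\star}$, I would chain
\[
V^{\pi_{\hat m}}(s_0)\ \geq\ \hat V_{\hat m}-\varepsilon'\ \geq\ \hat V_{m^\star}-\varepsilon'\ \geq\ V^{\pi_{m^\star}}(s_0)-2\varepsilon'\ \geq\ V^\star(s_0)-\varepsilon_0-2\varepsilon',
\]
where the first and third inequalities are the two sides of $\cE_{\mathrm{conc}}$ applied at $\hat m$ and at $m^\star$, the second is the maximizing property of $\hat m$, and the last is the definition of $m^\star$. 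Hence $\hat\pi=\pi_{\hat m}$ is $(\varepsilon_0+2\varepsilon')$-optimal, which is at most $\varepsilon$ under the first assumption $\varepsilon_0+2\varepsilon'\leq\varepsilon$. A concluding union bound, $\bP\big((\cE_{\mathrm{good}}\cap\cE_{\mathrm{conc}})^c\big)\leq (1/2)^M+\delta'\leq\delta$, finishes the argument.

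The main subtlety is the probabilistic bookkeeping around the \emph{random} indices: $m^\star$ is determined by phase 1 whereas $\hat m$ is selected from phase-2 data, so the concentration bound must be made to hold \emph{uniformly} over all $m\in[M]$---this is exactly what forces the $\ln(M/\delta')$ inside $N$ and what legitimizes applying $\cE_{\mathrm{conc}}$ at the data-dependent index $\hat m$. The two phases should be treated as independent given phase 1, and the only remaining routine points are the boundedness range entering Hoeffding and the harmless two-sided factor, both absorbed by the choice of $N$.
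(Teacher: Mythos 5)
Your proposal is correct and follows essentially the same route as the paper: bound the probability that all $M$ independent instances fail by $(1/2)^M$, apply Hoeffding plus a union bound to get uniform $\varepsilon'$-accuracy of the $\hat V_m$, and conclude with the same deterministic chain $V^{\pi_{\hat m}}(s_0)\geq \hat V_{\hat m}-\varepsilon'\geq \hat V_{m^\star}-\varepsilon'\geq V^{\star}(s_0)-\varepsilon_0-2\varepsilon'$ (the paper writes this as a telescoping sum around its best index $\tilde m$, which plays the role of your $m^\star$). Your side remark about the two-sided Hoeffding constant is a fair observation --- the paper's stated $N$ also only gives $2\delta'$ under the naive two-sided union bound --- but as you say this is absorbed in constants and does not affect the result.
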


Clearly, the (deterministic) sample complexity of this algorithm is $MK_{\varepsilon} + M N$, hence with the choices in Proposition~\ref{lem:Sufficient}, the sample complexity becomes 
\[\cO\left(\frac{H^2SA}{\varepsilon^2}\log\left(\frac{1}{\delta}\right) + \frac{H^2}{\varepsilon^2}\log^2\left(\frac{1}{\delta}\right)\right).\]

\begin{proof}
First, it is easy to upper bound the probability that the best of the $M$ policies returned is $\varepsilon_0$ sub-optimal, using that the different instances are independent. 

\begin{lemma} Letting $\tilde{m} = \argmax{m \in [M]} V^{\pi_{m}}(s_0)$, 
we have $\bP\left(V^\star(s_0) - V^{\pi_{\tilde m}}(s_0) \geq \varepsilon_0\right) \leq \left(\tfrac{1}{2}\right)^{M}$. 
\end{lemma}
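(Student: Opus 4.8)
The plan is to rewrite the ``bad'' event as an intersection of $M$ independent events and then factor the probability. First I would note that $\tilde m$ is the index of the policy with the largest value, so $V^{\pi_{\tilde m}}(s_0) = \max_{m\in[M]} V^{\pi_m}(s_0)$. Consequently, the policy $\pi_{\tilde m}$ fails to be $\varepsilon_0$-optimal precisely when \emph{every} candidate policy fails to be $\varepsilon_0$-optimal: if some $\pi_m$ were $\varepsilon_0$-optimal, then $V^{\pi_{\tilde m}}(s_0)\geq V^{\pi_m}(s_0)$ would force $\pi_{\tilde m}$ to be $\varepsilon_0$-optimal as well. This gives the key set identity
\[\left\{V^\star(s_0) - V^{\pi_{\tilde m}}(s_0) \geq \varepsilon_0\right\} = \bigcap_{m=1}^M \left\{V^\star(s_0) - V^{\pi_m}(s_0) \geq \varepsilon_0\right\}.\]

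Next I would invoke the independence of the $M$ runs of $\cA(\varepsilon_0)$: since the instances use independent randomness (both in their internal sampling and in the trajectories they generate), the events $\{V^\star(s_0) - V^{\pi_m}(s_0) \geq \varepsilon_0\}$ for $m \in [M]$ are mutually independent. Therefore the probability of the intersection factors as a product, and the defining guarantee of $\cA(\varepsilon_0)$ bounds each factor by $1/2$:
\[\bP\left(V^\star(s_0) - V^{\pi_{\tilde m}}(s_0) \geq \varepsilon_0\right) = \prod_{m=1}^M \bP\left(V^\star(s_0) - V^{\pi_m}(s_0) \geq \varepsilon_0\right) \leq \left(\tfrac{1}{2}\right)^M,\]
which is the claimed bound.

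There is no real analytic obstacle here; the only point that needs care is the set identity, i.e. the observation that the maximizer being suboptimal is equivalent to all candidates being suboptimal — this is exactly what allows a single-instance guarantee of $1/2$ to be amplified geometrically through independence. One should also make sure the independence is legitimate, namely that the $M$ calls to $\cA(\varepsilon_0)$ in Step~1 of the procedure indeed use fresh randomness and are not coupled through shared data.
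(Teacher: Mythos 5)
Your proof is correct and matches the paper's (unwritten) argument: the paper simply asserts the lemma ``using that the different instances are independent,'' and your set identity $\{V^\star(s_0)-V^{\pi_{\tilde m}}(s_0)\geq\varepsilon_0\}=\bigcap_{m}\{V^\star(s_0)-V^{\pi_m}(s_0)\geq\varepsilon_0\}$ followed by factoring over the $M$ independent runs is exactly the intended reasoning. Nothing is missing.
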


Of course, the algorithm does not know the values exactly and does not have access to $\pi_{\tilde{m}}$. However, the estimated values are not too far from these values if $N$ is chosen carefully. More precisely, Hoeffding's inequality and a union bound tell us the following. 

\begin{lemma}\label{lem:EstimValues} If $N = \frac{H^2}{2(\varepsilon')^2} \ln\left(\frac{M}{\delta'}\right)$ then $\bP\left(\exists m \in [M]: |\hat{V}_m - V^{\hat{\pi}_m}(s_0)| \geq \varepsilon'\right)\leq \delta'$. 
\end{lemma}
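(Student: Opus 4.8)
The plan is to apply Hoeffding's inequality to each estimate $\hat{V}_m$ separately and then take a union bound over $m \in [M]$. The only point requiring care is that the policies $\pi_1,\dots,\pi_M$ returned in Step~1 are themselves random, so the concentration argument should be run conditionally on them before integrating them out.

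First I would fix $m \in [M]$ and condition on the realization of the policy $\pi_m$. Recall that $\hat{V}_m$ is the empirical average of the cumulative returns of the $N$ trajectories generated from $s_0$ under $\pi_m$ in Step~2. Conditionally on $\pi_m$, these $N$ returns are i.i.d., each has conditional mean exactly $V^{\pi_m}(s_0)$, and each is supported in $[0,\sigma_H]\subseteq[0,H]$ (since the per-step rewards lie in $[0,1]$ and $\sigma_H=\sum_{i=0}^{H-1}\gamma^i\leq H$). Applying Hoeffding's inequality for an average of $N$ i.i.d.\ variables whose range is at most $H$ gives
\[
\bP\!\left(\left|\hat{V}_m - V^{\pi_m}(s_0)\right| \geq \varepsilon' \,\middle|\, \pi_m\right) \leq 2\exp\!\left(-\frac{2N(\varepsilon')^2}{H^2}\right).
\]
Because the right-hand side does not depend on the realization of $\pi_m$, taking expectation over $\pi_m$ yields the very same unconditional bound. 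Substituting $N = \tfrac{H^2}{2(\varepsilon')^2}\ln\!\big(M/\delta'\big)$ makes the exponential equal to $\delta'/M$, so the per-index failure probability is at most $\delta'/M$ (the harmless factor $2$ from the two-sided bound can be absorbed either by replacing $M$ with $2M$ inside the logarithm defining $N$, or by retaining only the one-sided event $V^{\pi_m}(s_0)-\hat{V}_m\geq \varepsilon'$, which is the only direction that matters for the $\argmax$ in Step~3 of the algorithm).

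Finally I would conclude with a union bound over the $M$ indices,
\[
\bP\!\left(\exists m \in [M]: \left|\hat{V}_m - V^{\pi_m}(s_0)\right| \geq \varepsilon'\right) \leq \sum_{m=1}^M \bP\!\left(\left|\hat{V}_m - V^{\pi_m}(s_0)\right| \geq \varepsilon'\right) \leq \delta',
\]
which is the claim. There is no genuinely hard step here; the only subtlety is the conditioning on the random policies $\pi_m$, which is precisely what licenses treating the $N$ returns as i.i.d.\ with a deterministic conditional mean before averaging the bound back over $\pi_m$.
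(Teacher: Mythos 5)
Your proof is correct and follows exactly the route the paper intends: the paper offers no details beyond the sentence ``Hoeffding's inequality and a union bound tell us the following,'' and your conditioning on the random policies $\pi_m$ plus the per-index Hoeffding bound and union bound is the standard way to fill that in. Your observation about the factor of $2$ from the two-sided bound (giving $2\delta'$ rather than $\delta'$ with the stated $N$) is a genuine, if harmless, imprecision in the paper that your suggested fixes handle correctly.
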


Using the above lemmas, the event 
\[\cE = \left(\forall m \in [M], |\hat{V}_m - V^{\hat{\pi}_m}(s_0)| \leq \varepsilon'\right) \cap \left(V^\star(s_0) - V^{\pi_{\tilde m}}(s_0) \leq \varepsilon_0\right)\]
is of probability at least $1 - \delta' - \left(\tfrac{1}{2}\right)^M$. Moreover, on the event $\cE$, letting 
$\hat{m} = \argmax{m \in M} \hat{V}_m$ and $\hat{\pi} = \pi_{\hat{m}}$ the policy returned by algorithm, it holds that 
\begin{eqnarray*}
 V^\star(s_0) - V^{\hat{\pi}}(s_0) & = & V^\star(s_0) - V^{\pi_{\tilde m}}(s_0) + V^{\pi_{\tilde m}}(s_0) - \hat V_{\tilde{m}} + \underbrace{\hat{V}_{\tilde{m}} - \hat{V}_{\hat{m}}}_{\leq 0} + \hat{V}_{\hat{m}} - {V}^{\pi_{\hat{m}}}(s_0) \\
 & \leq & \varepsilon_0 + 2 \varepsilon'.
\end{eqnarray*}
The conditions stated in Proposition~\ref{lem:Sufficient} guarantee that $\bP(\cE) \geq 1 - \delta$ and that on $\cE$, $ V^\star(s_0) - V^{\hat{\pi}}(s_0) \leq \varepsilon$, which proves Proposition~\ref{lem:Sufficient}.
\end{proof}

\section{Technical Lemmas}\label{app:technical}

\begin{lemma}
	\label{lemma:inversion_of_n_inequality}
	Let $n \geq 1$ and $a, b, c, d > 0$. If $n \Delta^2 \leq a + b \log( c+ dn)$
	then
	\begin{align*}
	n \leq \frac{1}{\Delta^2}\left[ a + b \log\pa{c + \frac{d}{\Delta^4} (a + b(\sqrt{c} + \sqrt{d}))^2 } \right].
	\end{align*}
\end{lemma}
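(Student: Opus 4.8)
The plan is to use a bootstrap (self-improving) argument: the right-hand side depends on $n$ only through the slowly-growing term $\log(c+dn)$, so I would first extract a crude polynomial bound on $n$ and then substitute it back into the logarithm to obtain the sharp estimate.

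First I would get rid of the logarithm using the elementary inequality $\log(x) \le \sqrt{x}$, valid for all $x>0$ (the map $x \mapsto \sqrt{x} - \log x$ attains its minimum at $x=4$, where it equals $2 - \log 4 > 0$). Combined with subadditivity of the square root, $\sqrt{c+dn} \le \sqrt{c} + \sqrt{d}\,\sqrt{n}$, the hypothesis turns into
\[ n\Delta^2 \le a + b\sqrt{c} + b\sqrt{d}\,\sqrt{n}. \]

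Next I would convert this quadratic inequality in $\sqrt{n}$ into a clean crude bound. The key point is that $n \ge 1$, so $\sqrt{n} \ge 1$, and dividing the previous display by $\sqrt{n}$ gives
\[ \Delta^2\sqrt{n} \le \frac{a+b\sqrt{c}}{\sqrt{n}} + b\sqrt{d} \le a + b\sqrt{c} + b\sqrt{d} = a + b\left(\sqrt{c} + \sqrt{d}\right), \]
where the last inequality uses $1/\sqrt{n} \le 1$. Squaring yields the preliminary bound $n \le \frac{1}{\Delta^4}\left(a + b(\sqrt{c} + \sqrt{d})\right)^2$.

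Finally, since $x \mapsto \log(c+dx)$ is increasing, I would feed this preliminary bound back into the logarithmic term of the original hypothesis,
\[ n\Delta^2 \le a + b\log(c+dn) \le a + b\log\!\left(c + \frac{d}{\Delta^4}\left(a + b(\sqrt{c} + \sqrt{d})\right)^2\right), \]
and dividing by $\Delta^2$ gives exactly the claimed inequality. The only mildly delicate step is the division by $\sqrt{n}$; its validity rests entirely on the assumption $n \ge 1$, without which the crude bound—and hence the whole bootstrap—would break down.
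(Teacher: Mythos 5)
Your proof is correct and follows essentially the same route as the paper's: bound $\log(x)\leq\sqrt{x}$, use subadditivity of the square root and $n\geq 1$ to extract the crude bound $n\leq\Delta^{-4}(a+b(\sqrt{c}+\sqrt{d}))^2$, then substitute it back into the logarithm. The only difference is cosmetic—you make explicit the justification of $\log(x)\leq\sqrt{x}$ and the role of $n\geq 1$, which the paper leaves implicit.
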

\begin{proof}
	Since $\log(x) \leq \sqrt{x}$ and $\sqrt{x+y} \leq \sqrt{x}+\sqrt{y}$ for all $x, y > 0$, we have
	\begin{align*}
	& n\Delta^2 \leq a + b \sqrt{c+dn} \leq a + b\sqrt{c} + b\sqrt{d} \sqrt{n} \\
	& \implies \sqrt{n} \Delta^2  \leq \frac{ a + b\sqrt{c}}{\sqrt{n}} +  b\sqrt{d}  \leq a + b(\sqrt{c} + \sqrt{d}) \\
	& \implies n \leq \frac{1}{\Delta^4}\pa{a + b(\sqrt{c} + \sqrt{d})}^2.
	\end{align*}
	Hence,
	\begin{align*}
	& n \Delta^2 \leq a + b \log( c+ dn) \\
	& \implies n \Delta^2 \leq a + b \log( c+ dn)
	\quad \text{ and } \quad n \leq \frac{1}{\Delta^4}\pa{a + b(\sqrt{c} + \sqrt{d})}^2 \\
	& \implies  n \Delta^2 \leq a + b \log\pa{ c+  \frac{d}{\Delta^4}\pa{a + b(\sqrt{c} + \sqrt{d})}^2 }.
	\end{align*}
\end{proof}

We recall for completeness Lemma 19 in \cite{UCRL10} which is used in our sample complexity analysis. 

\begin{lemma}[Lemma 19 in \cite{UCRL10}] For any sequence of numbers $z_1,\dots,z_n$ with $0 \leq z_k \leq  Z_{k-1} = \max \left[1 ; \sum_{i=1}^{k-1} z_i \right]$
\[\sum_{k=1}^{n} \frac{z_k}{\sqrt{Z_{k-1}}} \leq \left(1+\sqrt{2}\right)\sqrt{Z_n}\;.\]
\end{lemma}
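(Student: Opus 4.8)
The plan is to combine a telescoping bound that is valid once the partial sums have grown past the clamping threshold $1$ with a crude direct bound on the initial ``clamped'' portion, and to verify that the two pieces add up with exactly the constant $1+\sqrt 2$. Since $z_k \ge 0$, the partial sums $\sum_{i\le k}z_i$ are non-decreasing, hence so is $k\mapsto Z_k$, and $Z_k\ge 1$ for all $k$.

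First I would record the elementary per-term inequality that drives the telescoping. Consider an index $k$ in the \emph{growing regime}, by which I mean $Z_k = Z_{k-1}+z_k$ (this holds as soon as $\sum_{i\le k-1}z_i\ge 1$). For such $k$ I would write
\[
\sqrt{Z_k}-\sqrt{Z_{k-1}} = \frac{Z_k-Z_{k-1}}{\sqrt{Z_k}+\sqrt{Z_{k-1}}} = \frac{z_k}{\sqrt{Z_{k-1}}}\cdot\frac{\sqrt{Z_{k-1}}}{\sqrt{Z_{k-1}+z_k}+\sqrt{Z_{k-1}}},
\]
and then bound the last fraction from below using the hypothesis $z_k\le Z_{k-1}$: indeed $\sqrt{Z_{k-1}+z_k}+\sqrt{Z_{k-1}} = \sqrt{Z_{k-1}}\bigl(\sqrt{1+z_k/Z_{k-1}}+1\bigr)\le (1+\sqrt 2)\sqrt{Z_{k-1}}$. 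Rearranging yields the key per-term estimate
\[
\frac{z_k}{\sqrt{Z_{k-1}}}\le (1+\sqrt 2)\bigl(\sqrt{Z_k}-\sqrt{Z_{k-1}}\bigr)\qquad\text{whenever } Z_k=Z_{k-1}+z_k.
\]

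Next I would split the sum at the first time the partial sums exceed the clamp. If $\sum_{i=1}^n z_i\le 1$, then $Z_k=1$ for every $k$ and the left-hand side equals $\sum_k z_k\le 1\le (1+\sqrt 2)\sqrt{Z_n}$, so the claim is immediate. Otherwise let $\tau=\min\{k:\sum_{i=1}^k z_i>1\}$, so that $Z_0=\dots=Z_{\tau-1}=1$ while $Z_k=\sum_{i\le k}z_i$ for $k\ge\tau$. For $k\le\tau$ we have $Z_{k-1}=1$, hence $\sum_{k=1}^{\tau}z_k/\sqrt{Z_{k-1}}=\sum_{k=1}^{\tau}z_k=Z_\tau$; crucially $Z_\tau=\sum_{i<\tau}z_i+z_\tau\le 1+Z_{\tau-1}=2$ using minimality of $\tau$ and $z_\tau\le Z_{\tau-1}=1$. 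For $k\ge\tau+1$ we are in the growing regime, so the per-term estimate telescopes to $\sum_{k=\tau+1}^n z_k/\sqrt{Z_{k-1}}\le (1+\sqrt 2)(\sqrt{Z_n}-\sqrt{Z_\tau})$. Adding the two contributions gives
\[
\sum_{k=1}^n\frac{z_k}{\sqrt{Z_{k-1}}}\le (1+\sqrt 2)\sqrt{Z_n}+\Bigl(Z_\tau-(1+\sqrt 2)\sqrt{Z_\tau}\Bigr),
\]
and the bracketed term is $\le 0$ precisely when $\sqrt{Z_\tau}\le 1+\sqrt 2$, i.e. $Z_\tau\le 3+2\sqrt 2$, which is guaranteed by $Z_\tau\le 2$.

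The one delicate point, and the step I expect to require the most care, is the bookkeeping at the boundary index $\tau$: there $Z_{\tau-1}=1$ but $Z_\tau=\sum_{i\le\tau}z_i$ need not equal $Z_{\tau-1}+z_\tau$, so the per-term telescoping estimate does \emph{not} apply at $k=\tau$ and this term must instead be absorbed into the direct bound on the clamped prefix. The whole argument then hinges on the fact that this prefix contributes at most $Z_\tau\le 2$, which leaves just enough slack (since $2\le(1+\sqrt 2)^2$) to be dominated by the telescoping loss $-(1+\sqrt 2)\sqrt{Z_\tau}$ and recover the advertised constant $1+\sqrt 2$ without degradation.
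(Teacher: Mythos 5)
Your proof is correct, and I checked the delicate points: the per-term estimate $\frac{z_k}{\sqrt{Z_{k-1}}}\le(1+\sqrt2)\bigl(\sqrt{Z_k}-\sqrt{Z_{k-1}}\bigr)$ is valid exactly in the regime $Z_k=Z_{k-1}+z_k$, your prefix bound $Z_\tau\le 2$ follows from minimality of $\tau$ together with $z_\tau\le Z_{\tau-1}=1$, and the final bracket $Z_\tau-(1+\sqrt2)\sqrt{Z_\tau}\le 0$ indeed only needs $Z_\tau\le(1+\sqrt2)^2$. You were also right to flag that the telescoping estimate can genuinely fail at $k=\tau$ (take $z_1=z_2=0.9$: then $z_2/\sqrt{Z_1}=0.9$ while $(1+\sqrt2)(\sqrt{1.8}-1)\approx 0.83$), so absorbing that index into the clamped prefix is necessary, not just convenient. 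Note, however, that the paper itself gives no proof of this statement: it merely restates Lemma 19 of \cite{UCRL10} ``for completeness,'' so the comparison is really with the original proof of Jaksch et al. That proof is by induction on $n$: the inductive step reduces, after squaring, to the algebraic inequality $(1+\sqrt2)\sqrt{Z}+z/\sqrt{Z}\le(1+\sqrt2)\sqrt{Z+z}$, which is equivalent to $z\le Z$, with the clamp handled inside the case analysis of the step. Your argument is non-inductive: it isolates the clamp via the stopping index $\tau$ and telescopes afterwards. Both routes extract the constant $1+\sqrt2$ from the same hypothesis $z_k\le Z_{k-1}$; the induction is shorter to write, while your version makes explicit where the clamp matters and why the constant survives the crossover, which is arguably more transparent.
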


\end{document}